\documentclass[11pt]{article}

\usepackage[preprint]{acl}

\usepackage{times}
\usepackage{latexsym}

\usepackage[T1]{fontenc}

\usepackage[utf8]{inputenc}

\usepackage{microtype}

\usepackage{inconsolata}

\usepackage{graphicx}
\usepackage{wrapfig}
\usepackage{hyperref}
\usepackage{url}
\usepackage{booktabs}
\usepackage{graphicx}
\usepackage{subcaption}
\usepackage{mathtools}
\usepackage{amssymb}
\usepackage{multirow}
\usepackage[table]{xcolor}
\usepackage{enumitem}
\usepackage{ulem}
\usepackage{tcolorbox}
\usepackage[table]{xcolor}
\usepackage{amsthm}
\theoremstyle{plain}
\newtheorem{theorem}{Theorem}[section]
\newtheorem{lemma}[theorem]{Lemma}
\newtheorem{proposition}[theorem]{Proposition}
\newtheorem{assumptionenv}{Assumption}[section]
\theoremstyle{definition}

\theoremstyle{remark}

\usepackage{asymptote}

\usepackage{caption}
\usepackage{array} 
\usepackage{booktabs}
\usepackage{listings}
\usepackage{amssymb}
\usepackage{colortbl}      
\usepackage{graphicx}      
\definecolor{lightblue_ours}{RGB}{198,224,255}
\definecolor{lightorange_ours}{RGB}{255,229,204}
\definecolor{darkblue_ours}{RGB}{0,92,230}
\definecolor{darkorange_ours}{RGB}{230,92,0}
\newcommand{\posin}[1]{\cellcolor{lightorange_ours}{\textcolor{darkorange_ours}{#1}}}
\newcommand{\negin}[1]{\cellcolor{lightorange_ours}{#1}}
\newcommand{\posood}[1]{\cellcolor{lightblue_ours}{#1}}
\newcommand{\negood}[1]{\cellcolor{lightblue_ours}{\textcolor{darkblue_ours}{#1}}}
\newlength{\panelht}
\setlength{\panelht}{5.2cm}
\lstset{
  basicstyle=\ttfamily\small,
  columns=fullflexible,
  breaklines=true
}

\newcommand{\ours}{GLOW}

%
\setlength\titlebox{6cm}
%

\title{Learning from Mistakes: Negative Reasoning Samples Enhance Out-of-Domain Generalization}



\author{
    Xueyun Tian$^{\spadesuit\heartsuit*}$, 
    Minghua Ma$^{\diamondsuit*}$, 
    Bingbing Xu$^{\spadesuit\dagger}$, 
    Nuoyan Lyu$^{\spadesuit\heartsuit}$, 
    Wei Li \\ 
    \textbf{Heng Dong}$^{\clubsuit}$, 
    \textbf{Zheng Chu}$^{\diamondsuit}$, 
    \textbf{Yuanzhuo Wang}$^{\spadesuit}$, 
    \textbf{Huawei Shen}$^{\spadesuit\heartsuit}$ \\
    $^{\spadesuit}$CAS Key Laboratory of AI Safety, Institute of Computing Technology, CAS, Beijing, China \\
    $^{\diamondsuit}$Harbin Institute of Technology, Harbin, China \\
    $^{\heartsuit}$University of Chinese Academy of Sciences, Beijing, China \\
    $^{\clubsuit}$Tsinghua University, Beijing, China \\
    \small \texttt{\{tianxueyun23z, xubingbing, lvnuoyan23z, wangyuanzhuo, shenhuawei\}@ict.ac.cn} \\
    \small \texttt{\{mhma, zchu\}@ir.hit.edu.cn} \\
    \small \texttt{weili.ucas.ict@gmail.com}, \texttt{drdhxi@gmail.com}
}


\begin{document}
\maketitle
\begingroup
  \renewcommand\thefootnote{*}
  \footnotetext{Equal contribution}
  
  \renewcommand\thefootnote{$\dagger$}
  \footnotetext{Corresponding author}
\endgroup
\begin{abstract}
Supervised fine-tuning (SFT) on chain-of-thought (CoT) trajectories demonstrations is a common approach for enabling reasoning in large language models.
Standard practices typically only retain trajectories with correct final answers (\textit{positives}) while ignoring the rest (\textit{negatives}).
We argue that this paradigm discards substantial supervision and exacerbates overfitting, limiting out-of-domain (OOD) generalization.
Specifically, we surprisingly find that incorporating \textit{negative} trajectories into SFT yields substantial OOD generalization gains over \textit{positive-only} training, as these trajectories often retain valid intermediate reasoning despite incorrect final answers.
To understand this effect in depth, we systematically analyze data, training dynamics, and inference behavior, identifying 22 recurring patterns in \textit{negative} chains that serve a dual role: they moderate loss descent to mitigate overfitting during training and boost policy entropy by 35.67\% during inference to facilitate exploration.
Motivated by these observations, we further propose \textbf{Gain-based LOss Weighting} (\ours{}), an adaptive, sample-aware scheme that exploits such distinctive training dynamics by rescaling per-sample loss based on inter-epoch progress. 
Empirically, \ours{} efficiently leverages unfiltered trajectories, yielding a 5.51\% OOD gain over \textit{positive-only} SFT on Qwen2.5-7B and boosting MMLU from 72.82\% to 76.47\% as an RL initialization.
Code is available at \href{https://github.com/Eureka-Maggie/GLOW}{Github}\footnote{\url{https://github.com/Eureka-Maggie/GLOW}}.

\end{abstract}
\section{Introduction}
\label{sec:intro}
Recent studies~\citep{yang2025qwen3technicalreport,zelikman2022star, mukherjee2023orca, shao2024deepseekmath} have established Supervised Fine-Tuning (SFT) as a foundational post-training component.
SFT adapts base models with curated instruction data, often incorporating Chain-of-Thought (CoT) trajectories to enhance reasoning capabilities. The training target typically includes the reasoning trace followed by the final answer, optimized via standard next-token prediction. The resulting model frequently serves as the initialization for subsequent reinforcement learning (RL).



However, existing SFT on distilled CoT trajectories still faces two practical limitations that compromise both effectiveness and efficiency~\citep{luo2024semi,chu2025sft,gupta2025selective,deb2025fishersft}:
(i) \textbf{Poor generalization}. 
Models may overfit to domain-specific shortcuts within demonstrations rather than acquiring transferable reasoning skills~\citep{press2022measuring,han2025general}, leading to limited transferability to out-of-distribution (OOD) tasks.
(ii) \textbf{Data inefficiency}. Current pipelines typically distill CoT trajectories from a stronger teacher and then apply rejection sampling~\citep{ahn2024large} that retains only \textit{positive} trajectories. This wastes supervision and may discard traces that contain useful intermediate reasoning signals~\citep{hamdan2025much,luo2024robustft,li2025llms}.

\begin{figure*}[t]
    \centering
    \begin{subfigure}[c]{0.65\textwidth}
        \centering
        \includegraphics[width=\linewidth]{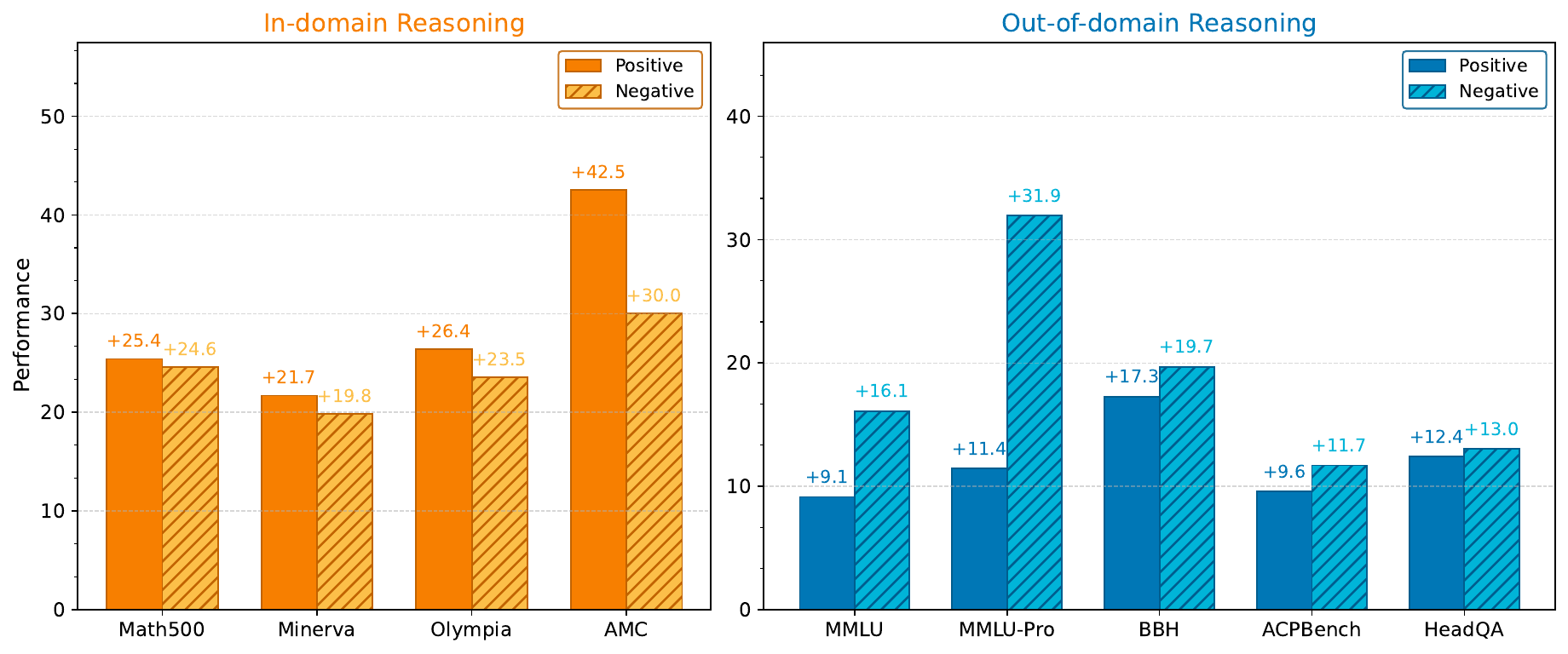}
        \caption{}
        \label{fig:intro:a}
    \end{subfigure}
    \begin{subfigure}[c]{0.34\textwidth}
        \centering
        \includegraphics[width=\linewidth]{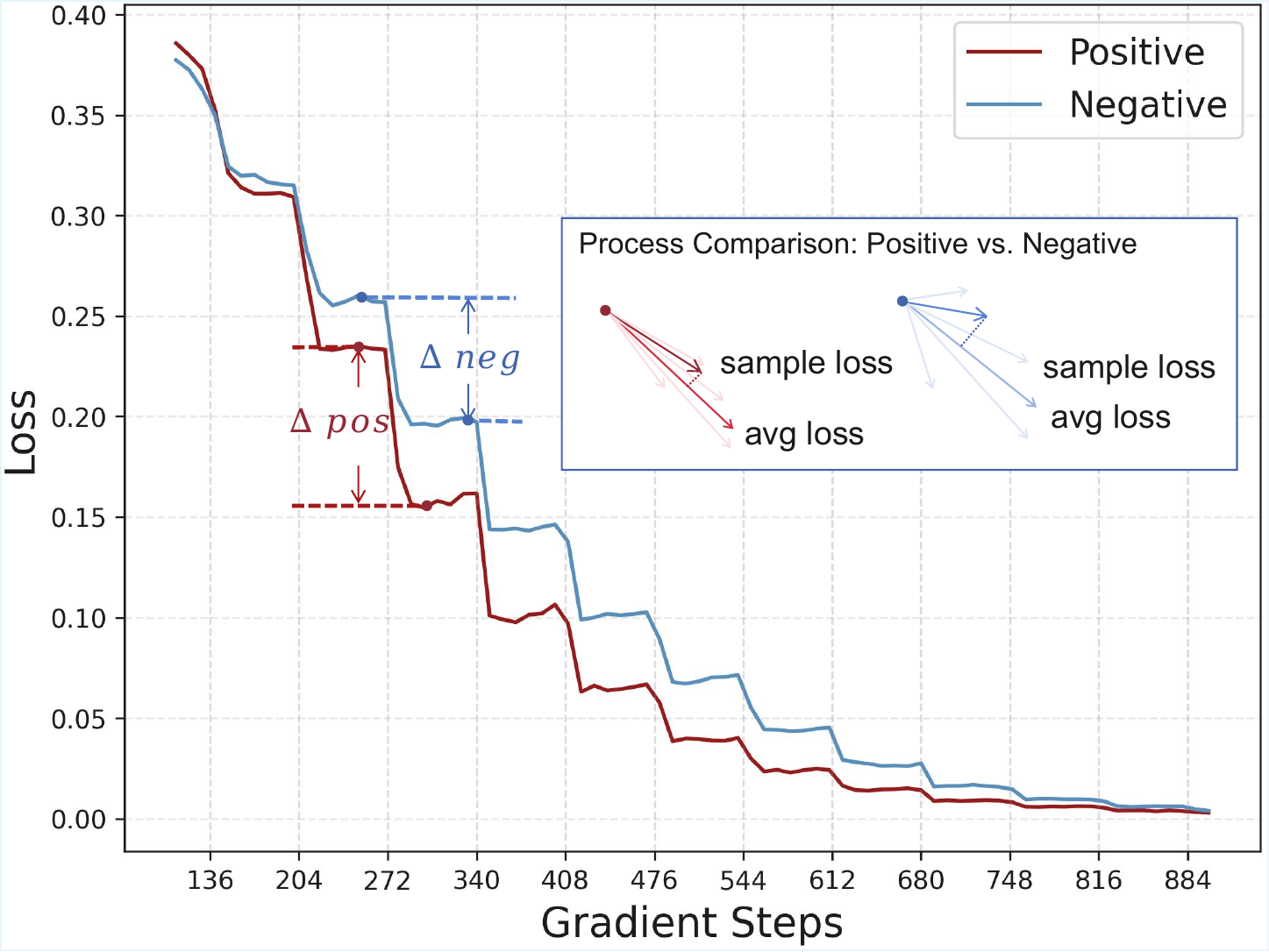}
        \caption{}
        \label{fig:intro:b}
    \end{subfigure}
    \caption{
    (a) Qwen2.5-14B: SFT on \textit{positives} improves in-domain math but transfers weakly to other reasoning tasks, whereas SFT on \textit{negatives} yields broader cross-domain gains. Bars show final accuracy, and “+” indicates absolute improvement over the base model.
    (b) Qwen2.5-32B: training loss on MMLU. Red denotes \textit{positive-only} SFT and blue denotes \textit{negative-only} SFT. $\Delta$ is the per-sample inter-epoch loss difference.}
    \label{fig:intro}
    \vspace{-13pt}
\end{figure*}

We argue that these typically discarded \textit{negatives} offer a promising opportunity to alleviate both limitations, as these often include valid intermediate reasoning and diverse modes.
To investigate this, 
we distill math reasoning trajectories from Qwen3-8B~\citep{yang2025qwen3} and compare student models trained only on \textit{positives} versus only on \textit{negatives}. Figure~\ref{fig:intro:a} shows a surprising result: \textbf{models trained only on \textit{negatives} outperform those trained only on \textit{positives} on many benchmarks, with larger gains on OOD evaluations.}

This counterintuitive effect motivates a deeper analysis of \textit{negatives} across data, optimization, and inference.
\textbf{Regarding data}, we identify 9 error types with 22 recurring patterns (Table~\ref{tab:error_category}). This diversity exposes the model to broad error regimes, fostering intrinsic reasoning signals that generalize across contexts.
\textbf{In terms of optimization}, \textit{negative-only} SFT shows slower convergence yet steady performance gains across epochs (Figure~\ref{fig:intro:b}, Table~\ref{tab:epoch_sweep_pos_neg}). The consistently smaller inter-epoch loss reduction ($\Delta$) implies a more challenging optimization landscape that resists rapid convergence, thereby mitigating shortcut overfitting and compelling the model to learn robust reasoning features rather than spurious correlations.
\textbf{For inference}, training on \textit{negatives} significantly boosts policy entropy and pass@k on OOD tasks (Appendix~\ref{app:passk}), thereby facilitating diverse exploration and enhancing generalization, respectively.
\textbf{
Overall, these insights reveal a cohesive mechanism: the diverse patterns in \textit{negatives} act as a natural regularizer that modulates training dynamics to prevent shortcut learning while increasing inference entropy to encourage exploration, collectively unlocking superior OOD generalization.
}

Motivated by these observations, we seek to synergize the strengths of \textit{positive} and \textit{negative} trajectories within the SFT framework.
To achieve this, we propose \textbf{G}ain-based \textbf{LO}ss \textbf{W}eighting (\ours{}), a dynamic reweighting scheme utilizing the entire dataset to maximize sample efficiency without explicit filtering.
During training, \ours{} measures each sample’s gain as its inter-epoch loss reduction and adaptively upweights those with low gain.
Such samples, typically aligning with the \textit{negatives} with small $\Delta$, signal insufficient learning and steer optimization toward undercovered reasoning patterns.
Empirically, \ours{} yields consistent gains across model families and scales: on Qwen2.5-7B, it improves average performance by 2.14\% over mixed-data training and OOD performance by 5.51\% over \textit{positive-only} SFT, and as an RL initialization, it further boosts MMLU from 72.82\% to 76.47\% under the same RL setup (Table~\ref{tab:rl_init}).

Our contributions can be summarized as follows:
\begin{itemize}
    \item \textbf{Systematic investigation of \textit{negatives}:} We demonstrate negative trajectories significantly enhance OOD generalization. A unified analysis across data, optimization, and inference reveals that exposure to diverse error patterns mitigates overfitting and fosters exploration.
    \item \textbf{Adaptive Training Strategy:} We propose a sample-aware reweighting strategy for utilizing unfiltered data. By modulating loss based on inter-epoch learning progress, \ours{} prioritizes underexplored patterns, enabling efficient and generalizable SFT.
    \item \textbf{Superior SFT Generalization and RL Initialization:} Experiments validate \ours{} across diverse benchmarks. It yields consistent OOD improvements and transfers effectively to RL, serving as a superior initialization that amplifies the gains from RL. 
    
\end{itemize}

\section{Related Works}
\paragraph{Supervised Fine-Tuning for Reasoning}
SFT is a widely adopted approach for strengthening the reasoning ability of large language models~\citep{wei2021finetuned,ouyang2022training}. A common recipe is that we distills CoT trajectories from stronger teacher models and uses them to supervise smaller or less capable students~\citep{shao2024deepseekmathpushinglimitsmathematical,zheng2025groupsequencepolicyoptimization,yu2025dapoopensourcellmreinforcement}. To ensure data quality, conventional pipelines often employ rejection sampling as a rigorous filter, retaining only those trajectories that yield correct final answers~\citep{ahn2024large}. Such CoT-based SFT can transfer long-form reasoning patterns and often provides a strong initialization for subsequent reinforcement learning~\citep{lewkowycz2022solving,shao2024deepseekmath}. However, this heavy filtering discards a substantial portion of available trajectories, wastefully discarding rich supervisory information.

\paragraph{Learning from Negative Data}
Prior work leverages negative samples mainly in three ways: prompting, fine-tuning, and reinforcement learning. Prompt-based methods place negative examples in the context to steer generation away from undesired behaviors~\citep{gao2024customizing,alazraki2025no}. Their effectiveness, however, depends on the model’s existing instruction-following and reasoning ability, which limits their impact on weak students.
Fine-tuning-based approaches use negative data more indirectly. A common strategy is to convert initially incorrect trajectories into positive CoT supervision via teacher rewriting or refinement~\citep{yu2025self,pan2025lemma,an2023learning}. Other works add explicit markers or prefixes to separate correct from incorrect samples during training~\citep{wang2024learning,tong2024can}. These methods do not establish whether learning from raw incorrect trajectories themselves improves generalization.

\paragraph{Domain Generalization in LLMs}
Most fine-tuning work improves reasoning within a single domain, such as mathematics or code, while cross-domain transfer remains underexplored. \citet{huan2025does} show that SFT on math induces substantial representation shifts that can degrade general capabilities. \citet{wu2025knowledge} propose knowledge index and information gain to separate knowledge from reasoning, and find that SFT on math offers limited benefit in knowledge-intensive domains. \citet{yang2025decoupling} and \citet{zhao2025chain} further argue that SFT often learns superficial reasoning traces and transfers poorly across domains. These studies are primarily diagnostic and do not develop methods or examine how data selection and supervision signals affect cross-domain generalization.

\section{The Surprising Phenomenon: Negatives Generalize Better}
\label{sec:phenomenon}
In this section, we describe the empirical phenomenon that motivates our study: fine-tuning on negative reasoning samples can enhance OOD generalization more effectively than fine-tuning on \textit{positive} samples. We first detail the controlled experiments designed to validate this phenomenon and then present results that demonstrate its consistency across diverse benchmarks and model scales.

\subsection{Data Construction and Training Setup}
\label{sec:exp_setting}
Using Qwen3-8B, we distill trajectories from OpenMathReasoning~\citep{moshkov2025aimo2} and MMLU~\citep{hendryckstest2021}, labeling those matching the ground truth as \textit{positive} and others as \textit{negative}. We construct balanced datasets of complete reasoning chains to fine-tune Qwen-2.5 (from 3B to 32B) and Llama-3.1 8B. See Appendix~\ref{app:dataset} for detailed configurations.

\subsection{Negatives Surpass Positives in OOD}

\label{subsec:results_of_phenomenon}

As shown in Table~\ref{tab:full_math} and Table~\ref{tab:full_mmlu}, we surprisingly find that training on \textit{negative} samples, although it yields smaller improvements than \textit{positive} samples on in-domain performance, consistently improves OOD generalization. Overall, models trained on \textit{negative} math reasoning samples achieve an average improvement of 11.97\% on general reasoning tasks and 4.11\% on other reasoning tasks. Similarly, models trained on \textit{negative} MMLU samples gain an average of 1.98\% on mathematical reasoning and 1.35\% on other reasoning benchmarks. Although mathematical problems are generally more suitable for constructing reasoning-focused data, the same trend is observed for models trained on MMLU, indicating that the benefit of \textit{negative} samples for OOD generalization is not limited to a specific domain. These observations motivate a deeper analysis into the underlying factors that make \textit{negative} samples more effective for enhancing OOD reasoning performance.



\begin{table*}[htbp]
\centering
\resizebox{\textwidth}{!}{
\begin{tabular}{ll|cccc|c|ccc|c|cc|c}
\toprule
\multicolumn{2}{c|}{ } 
 & \multicolumn{5}{c|}{Math Reasoning (In-Domain)} 
 & \multicolumn{4}{c|}{General Reasoning (Out-of-Domain)} 
 & \multicolumn{3}{c}{Other Reasoning (Out-of-Domain)} \\
\cmidrule(lr){3-7} \cmidrule(lr){8-11} \cmidrule(lr){12-14}
Model & Setting 
 & Math500 & Minerva & Olympia & AMC & \multicolumn{1}{c|}{Avg.} 
 & MMLU & MMLU-Pro & BBH & \multicolumn{1}{c|}{Avg.} 
 & ACPBench & HeadQA & Avg. \\
\specialrule{0.65pt}{0pt}{0pt}
\multirow{5}{*}{Qwen2.5-3B} 
 & Base        & 52.60 & 21.32 & 22.52 & 32.50 & 32.24 
                & 31.88 & 12.54 & 27.75 & 24.06 
                & 23.31 & 33.15 & 28.23 \\
 & Full        & 60.80 & 26.10 & 23.26 & 35.00 & 36.29
                & 64.13 & 38.66 & 52.29 & 51.69 
                & 32.68 & 62.69 & 47.69 \\
\cmidrule(lr){2-14}
 & Positive    & \posin{61.60} & \posin{25.74} & \posin{24.44} & \posin{42.50} & \posin{38.60} 
                & \posood{54.45} & \posood{25.62} & \posood{44.35} & \posood{41.50} 
                & \posood{30.21} & \posood{59.81} & \negood{45.01} \\
 & Negative    & \negin{58.60} & \negin{23.53} & \negin{24.15} & \negin{42.50} & \negin{37.20} 
                & \negood{64.09} & \negood{39.20} & \negood{53.87} & \negood{52.39} 
                & \negood{33.06} & \negood{63.13} & \posood{48.10} \\
 & \cellcolor{gray!20}{$\Delta(\text{pos-neg})$} 
                & \cellcolor{gray!20}{+3.00} & \cellcolor{gray!20}{+2.21} & \cellcolor{gray!20}{+0.29} & \cellcolor{gray!20}{0.00} & \cellcolor{gray!20}{+1.38} 
                & \cellcolor{gray!20}{-9.64} & \cellcolor{gray!20}{-13.58} & \cellcolor{gray!20}{-9.52} & \cellcolor{gray!20}{-10.91} 
                & \cellcolor{gray!20}{-2.85} & \cellcolor{gray!20}{-3.32} & \cellcolor{gray!20}{-3.09} \\
\midrule

\multirow{5}{*}{Qwen2.5-7B} 
 & Base        & 58.40 & 26.84 & 26.07 & 52.50 & 40.95 
                & 55.80 & 26.56 & 51.10 & 44.49 
                & 28.77 & 57.29 & 43.03 \\
 & Full        & 76.60 & 40.07 & 38.96 & 55.00 & 52.66 
                & 72.24 & 53.71 & 70.84 & 65.60 
                & 38.27 & 72.06 & 55.17 \\
\cmidrule(lr){2-14}
 & Positive    & \posin{78.00} & 36.76 & \posin{41.78} & \posin{57.50} & \posin{53.51} 
                & \posood{61.03} & \posood{32.70} & \posood{60.58} & \posood{51.44} 
                & \posood{33.38} & \posood{68.60} & \posood{50.99} \\
 & Negative    & \negin{77.60} & 40.44 & \negin{38.37} & \negin{57.50} & \negin{53.48} 
                & \negood{73.11} & \negood{53.74} & \negood{71.73} & \negood{66.19} 
                & \negood{38.98} & \negood{71.81} & \posood{55.40} \\
 & \cellcolor{gray!20}{$\Delta$(pos-neg)}
                & \cellcolor{gray!20}{+0.40} & \cellcolor{gray!20}{-3.68} & \cellcolor{gray!20}{+3.41} & \cellcolor{gray!20}{0.00} & \cellcolor{gray!20}{+0.03} 
                & \cellcolor{gray!20}{-12.08} & \cellcolor{gray!20}{-21.04} & \cellcolor{gray!20}{-11.15} & \cellcolor{gray!20}{-14.76} 
                & \cellcolor{gray!20}{-5.60} & \cellcolor{gray!20}{-3.21} & \cellcolor{gray!20}{-4.41} \\
\midrule

\multirow{5}{*}{Qwen2.5-14B} 
 & Base        & 62.60 & 26.84 & 27.56 & 40.00 & 39.25 
                & 64.68 & 35.77 & 59.27 & 53.24 
                & 37.04 & 68.75 & 52.90 \\
 & Full        & 86.80 & 47.79 & 52.30 & 82.50 & 67.35 
                & 81.56 & 67.63 & 80.90 & 76.70 
                & 48.13 & 81.44 & 64.79 \\
\cmidrule(lr){2-14}
 & Positive    & \posin{88.00} & \posin{48.53} & \posin{53.93} & \posin{82.50} & \posin{68.24} 
                & \posood{73.81} & \posood{47.21} & \posood{76.54} & \posood{65.85} 
                & \posood{46.62} & \posood{81.15} & \posood{63.89} \\
 & Negative    & \negin{87.20} & \negin{46.69} & \negin{51.11} & \negin{70.00} & \negin{63.75} 
                & \negood{80.77} & \negood{67.70} & \negood{78.95} & \negood{75.81} 
                & \negood{48.73} & \negood{81.77} & \negood{65.25} \\
 & \cellcolor{gray!20}{$\Delta$(pos-neg)} 
                & \cellcolor{gray!20}{+0.80} & \cellcolor{gray!20}{+1.84} & \cellcolor{gray!20}{+2.82} & \cellcolor{gray!20}{+12.50} & \cellcolor{gray!20}{+4.49} 
                & \cellcolor{gray!20}{-6.96} & \cellcolor{gray!20}{-20.49} & \cellcolor{gray!20}{-2.41} & \cellcolor{gray!20}{-9.95} 
                & \cellcolor{gray!20}{-2.11} & \cellcolor{gray!20}{-0.62} & \cellcolor{gray!20}{-1.37} \\
\midrule

\multirow{5}{*}{Qwen2.5-32B} 
 & Base        & 63.20 & 34.19 & 26.52 & 35.00 & 39.73 
                & 68.34 & 39.80 & 58.65 & 55.60 
                & 38.63 & 68.45 & 53.54 \\
 & Full        & 92.20 & 52.57 & 57.19 & 85.00 & 71.74 
                & 85.22 & 73.10 & 83.53 & 80.62 
                & 50.67 & 84.90 & 67.79 \\
\cmidrule(lr){2-14}
 & Positive    & 91.40 & \posin{50.74} & \posin{60.89} & 85.00 & 72.01 
                & \posood{79.01} & \posood{54.31} & \posood{80.61} & \posood{71.31} 
                & \posood{49.96} & \posood{83.15} & \posood{66.56} \\
 & Negative    & 92.20 & \negin{50.74} & \negin{58.37} & 95.00 & 74.08 
                & \negood{85.47} & \negood{73.53} & \negood{84.51} & \negood{81.17} 
                & \negood{51.80} & \negood{85.27} & \negood{68.54} \\
 & \cellcolor{gray!20}{$\Delta$(pos-neg)} 
                & \cellcolor{gray!20}{-0.80} & \cellcolor{gray!20}{0.00} & \cellcolor{gray!20}{+2.52} & \cellcolor{gray!20}{-10.00} & \cellcolor{gray!20}{-2.07} 
                & \cellcolor{gray!20}{-6.46} & \cellcolor{gray!20}{-19.22} & \cellcolor{gray!20}{-3.90} & \cellcolor{gray!20}{-9.86} 
                & \cellcolor{gray!20}{-1.84} & \cellcolor{gray!20}{-2.12} & \cellcolor{gray!20}{-1.98} \\
\midrule

\multirow{5}{*}{Llama3.1-8B} 
 & Base        & 2.80 & 1.10 & 0.44 & 0.00 & 1.09 
                & 66.49 & 0.47 & 2.33 & 23.10 
                & 5.18 & 2.30 & 3.74 \\
 & Full        & 41.20 & 18.01 & 14.67 & 15.00 & 22.22 
                & 62.48 & 36.88 & 55.12 & 51.49 
                & 32.96 & 65.90 & 49.43 \\
\cmidrule(lr){2-14}
 & Positive    & \posin{37.80} & 18.01 & \posin{10.37} & \posin{12.50} & 19.67 
                & \posood{41.95} & \posood{23.15} & \posood{45.07} & \posood{36.72} 
                & \posood{31.20} & \posood{47.81} & \posood{39.50} \\
 & Negative    & \negin{34.40} & 18.38 & \negin{9.19} & \negin{20.00} & 20.49 
                & \negood{62.14} & \negood{36.22} & \negood{54.85} & \negood{51.07} 
                & \negood{33.31} & \negood{65.17} & \negood{49.24} \\
 & \cellcolor{gray!20}{$\Delta$(pos-neg)} 
                & \cellcolor{gray!20}{+3.40} & \cellcolor{gray!20}{-0.37} & \cellcolor{gray!20}{+1.18} & \cellcolor{gray!20}{-7.50} & \cellcolor{gray!20}{-0.82} 
                & \cellcolor{gray!20}{-20.19} & \cellcolor{gray!20}{-13.07} & \cellcolor{gray!20}{-9.78} & \cellcolor{gray!20}{-14.35} 
                & \cellcolor{gray!20}{-2.11} & \cellcolor{gray!20}{-17.36} & \cellcolor{gray!20}{-9.74} \\
\bottomrule
\end{tabular}
}
\caption{Cross-domain performance on \textbf{math reasoning}. ``Avg.'' is the within-group average. \posin{orange} highlights in-domain benchmarks where \textit{positives} outperform \textit{negatives}, and \negood{blue} highlights OOD benchmarks where \textit{negatives} outperform \textit{positives}. The higher score in each pair is highlighted accordingly.
}
\label{tab:full_math}
\vspace{-10pt}
\end{table*}

\begin{table*}[htbp]
\vspace{-10pt}
\centering

\resizebox{\textwidth}{!}{
\begin{tabular}{ll|cccc|c|ccc|c|cc|c}
\toprule
\multicolumn{2}{c|}{ } 
 & \multicolumn{5}{c|}{Math Reasoning (Out-of-Domain)} 
 & \multicolumn{4}{c|}{General Reasoning (In-Domain)} 
 & \multicolumn{3}{c}{Other Reasoning (Out-of-Domain)} \\
\cmidrule(lr){3-7} \cmidrule(lr){8-11} \cmidrule(lr){12-14}
Model & Setting 
 & Math500 & Minerva & Olympia & AMC & \multicolumn{1}{c|}{Avg.} 
 & MMLU & MMLU-Pro & BBH & \multicolumn{1}{c|}{Avg.} 
 & ACPBench & HeadQA & Avg. \\
\specialrule{0.65pt}{0pt}{0pt}
\multirow{5}{*}{Qwen2.5-3B} 
 & Base        & 52.60 & 21.32 & 22.52 & 32.50 & 32.24 
                & 31.88 & 12.54 & 27.75 & 24.06 
                & 23.31 & 33.15 & 28.23 \\
 & Full        & 58.20 & 23.16 & 25.19 & 35.00 & 35.39 
                & 66.74 & 40.82 & 53.35 & 53.64 
                & 35.70 & 67.61 & 51.66 \\
\cmidrule(lr){2-14}
 & Positive    & \posood{59.20} & \posood{27.21} & \posood{25.04} & \posood{30.00} & \posood{35.36} 
                & \posin{67.88} & \posin{42.56} & \posin{52.84} & \posin{54.43} 
                & \posood{34.93} & \posood{67.69} & \posood{51.31} \\
 & Negative    & \negood{59.60} & \negood{28.31} & \negood{25.48} & \negood{40.00} & \negood{38.35} 
                & \negin{65.42} & \negin{38.55} & \negin{52.28} & \negin{52.08} 
                & \negood{36.13} & \negood{68.85} & \negood{52.49} \\
 & \cellcolor{gray!20}{$\Delta$(pos-neg)} 
                & \cellcolor{gray!20}{-0.40} & \cellcolor{gray!20}{-1.10} & \cellcolor{gray!20}{-0.44} & \cellcolor{gray!20}{-10.00} & \cellcolor{gray!20}{-2.99} 
                & \cellcolor{gray!20}{+2.32} & \cellcolor{gray!20}{+4.01} & \cellcolor{gray!20}{+0.56} & \cellcolor{gray!20}{+2.30} 
                & \cellcolor{gray!20}{-1.20} & \cellcolor{gray!20}{-1.16} & \cellcolor{gray!20}{-1.18} \\
\midrule

\multirow{4}{*}{Qwen2.5-7B} 
 & Base        & 58.40 & 26.84 & 26.07 & 52.50 & 40.95 
                & 55.80 & 26.56 & 51.10 & 44.49 
                & 28.77 & 57.29 & 43.03 \\
 & Full        & 75.60 & 38.60 & 40.15 & 47.50 & 50.46 
                & 73.14 & 51.15 & 71.30 & 65.20 
                & 42.18 & 72.76 & 57.47 \\
\cmidrule(lr){2-14}
 & Positive    & \posood{74.40} & 37.50 & \posood{39.85} & \posood{50.00} & \posood{50.44} 
                & \posin{73.42} & \posin{53.22} & 68.23 & \posin{64.96} 
                & \posood{40.32} & 74.25 & \posood{57.29} \\
 & Negative    & \negood{77.00} & 37.13 & \negood{42.07} & \negood{60.00} & \negood{54.05} 
                & \negin{71.23} & \negin{45.79} & 69.46 & \negin{62.16} 
                & \negood{42.61} & 73.38 & \negood{58.00} \\
 & \cellcolor{gray!20}{$\Delta$(pos-neg)} 
        & \cellcolor{gray!20}{-2.60} & \cellcolor{gray!20}{+0.37} & \cellcolor{gray!20}{-2.22} & \cellcolor{gray!20}{-10.00} & \cellcolor{gray!20}{-3.61} 
        & \cellcolor{gray!20}{+2.19} & \cellcolor{gray!20}{+7.43} & \cellcolor{gray!20}{-1.23} & \cellcolor{gray!20}{+2.80} 
        & \cellcolor{gray!20}{-2.29} & \cellcolor{gray!20}{+0.87} & \cellcolor{gray!20}{-0.71} \\

\midrule
\multirow{4}{*}{Qwen2.5-14B} 
 & Base        & 62.60 & 26.84 & 27.56 & 40.00 & 39.25 
                & 64.68 & 35.77 & 59.27 & 53.24 
                & 37.04 & 68.75 & 52.90 \\
 & Full        & 82.20 & 43.01 & 51.85 & 70.00 & 61.77 
                & 78.13 & 59.57 & 80.56 & 72.75 
                & 48.87 & 79.94 & 64.41 \\
\cmidrule(lr){2-14}
 & Positive    & \posood{80.20} & \posood{42.28} & 50.96 & 72.50 & 61.49 
                & \posin{80.09} & \posin{65.26} & \posin{80.21} & \posin{75.19} 
                & 48.56 & \posood{80.53} & 64.55 \\
 & Negative    & \negood{83.00} & \negood{45.22} & 48.89 & 65.00 & 60.53 
                & \negin{76.83} & \negin{56.03} & \negin{80.15} & \negin{71.00} 
                & 48.27 & \negood{80.56} & 64.42 \\
 & \cellcolor{gray!20}{$\Delta$(pos-neg)} 
        & \cellcolor{gray!20}{-2.80} & \cellcolor{gray!20}{-2.94} & \cellcolor{gray!20}{+2.07} & \cellcolor{gray!20}{+7.50} & \cellcolor{gray!20}{+0.96} 
        & \cellcolor{gray!20}{+3.26} & \cellcolor{gray!20}{+9.23} & \cellcolor{gray!20}{+0.06} & \cellcolor{gray!20}{+4.18} 
        & \cellcolor{gray!20}{+0.29} & \cellcolor{gray!20}{-0.03} & \cellcolor{gray!20}{+0.13} \\

\midrule
\multirow{4}{*}{Qwen2.5-32B} 
 & Base        & 63.20 & 34.19 & 26.52 & 35.00 & 39.73 
                & 68.34 & 39.80 & 58.65 & 55.60 
                & 38.63 & 68.45 & 53.54 \\
 & Full        & 86.60 & 46.69 & 55.70 & 80.00 & 67.25 
                & 79.06 & 61.15 & 79.94 & 73.38 
                & 49.89 & 83.01 & 66.45 \\
\cmidrule(lr){2-14}
 & Positive    & \posood{85.20} & \posood{46.69} & \posood{56.15} & 75.00 & 65.76 
                & \posin{81.97} & \posin{68.54} & \posin{81.60} & \posin{77.37} 
                & \posood{50.35} & 82.90 & 66.63 \\
 & Negative    & \negood{86.40} & \negood{47.06} & \negood{56.89} & 72.50 & 65.71 
                & \negin{77.99} & \negin{58.34} & \negin{80.71} & \negin{72.35} 
                & \negood{51.20} & 82.39 & \negood{66.80} \\
 & \cellcolor{gray!20}{$\Delta$(pos-neg)} 
        & \cellcolor{gray!20}{-1.20} & \cellcolor{gray!20}{-0.37} & \cellcolor{gray!20}{-0.74} & \cellcolor{gray!20}{+2.50} & \cellcolor{gray!20}{+0.05} 
        & \cellcolor{gray!20}{+3.98} & \cellcolor{gray!20}{+10.20} & \cellcolor{gray!20}{+0.89} & \cellcolor{gray!20}{+5.02} 
        & \cellcolor{gray!20}{-0.85} & \cellcolor{gray!20}{+0.51} & \cellcolor{gray!20}{-0.17} \\

\midrule
\multirow{4}{*}{Llama3.1-8B} 
 & Base        & 2.80 & 1.10 & 0.44 & 0.00 & 1.09 
                & 66.49 & 0.47 & 2.33 & 23.10 
                & 5.18 & 2.30 & 3.74 \\
 & Full        & 20.00 & 15.81 & 6.52 & 2.50 & 11.21 
                & 66.49 & 40.56 & 53.73 & 53.59 
                & 36.06 & 69.55 & 52.81 \\
\cmidrule(lr){2-14}
 & Positive    & \posood{15.60} & \posood{11.76} & \posood{3.85} & \posood{7.50} & \posood{9.68} 
                & \posin{64.73} & \posin{39.74} & 45.39 & 49.95 
                & \posood{29.61} & \posood{67.69} & \posood{48.65} \\
 & Negative    & \negood{23.00} & \negood{16.18} & \negood{6.67} & \negood{10.00} & \negood{13.96} 
                & \negin{64.63} & \negin{38.85} & 53.23 & 52.24 
                & \negood{37.15} & \negood{69.80} & \negood{53.48} \\
 & \cellcolor{gray!20}{$\Delta$(pos-neg)} 
        & \cellcolor{gray!20}{-7.40} & \cellcolor{gray!20}{-4.42} & \cellcolor{gray!20}{-2.82} & \cellcolor{gray!20}{-2.50} & \cellcolor{gray!20}{-4.29} 
        & \cellcolor{gray!20}{+0.10} & \cellcolor{gray!20}{+0.89} & \cellcolor{gray!20}{-7.84} & \cellcolor{gray!20}{-2.28} 
        & \cellcolor{gray!20}{-7.54} & \cellcolor{gray!20}{-2.11} & \cellcolor{gray!20}{-4.83} \\

\bottomrule
\end{tabular}
}
\caption{Cross-domain performance on \textbf{general reasoning}. ``Avg.'' is the within-group average. \posin{orange} highlights in-domain benchmarks where \textit{positives} outperform \textit{negatives}, and \negood{blue} highlights OOD benchmarks where \textit{negatives} outperform \textit{positives}. The higher score in each pair is highlighted accordingly.
}
\label{tab:full_mmlu}
\end{table*}

\section{Why Negative is Better}
\label{sec:analysis}
To explain why \textit{negatives} benefit OOD generalization, we analyze the phenomenon from data, optimization, and inference perspectives. Empirically, \textit{positives} tend to share a small set of success pattern, while \textit{negatives} exhibit much richer failure modes. We first characterize the diversity introduced by \textit{negatives}. We then examine training dynamics to show how this diversity shapes optimization. Finally, we analyze inference behavior to connect these effects to improved OOD performance.


\subsection{Data Perspective}
\label{sec:analysis_data}
\begin{table}[t]
\centering
\renewcommand{\arraystretch}{0.92}
\resizebox{\columnwidth}{!}{%
\begin{tabular}{lcc}
    \toprule
    \textbf{Error Categories} & \textbf{OpenMathReasoning} & \textbf{MMLU} \\
    \midrule
    Calculation      & 27   & 9    \\
    Completeness     & 11   & 28   \\
    Evaluation System& 2599 & 2024 \\
    Formal           & 57   & 123  \\
    Knowledge        & 27   & 199  \\
    Logical          & 195  & 4116 \\
    Programming      & 8    & 5    \\
    Understanding    & 435  & 1056 \\
    Special Cases    & 301  & 1137 \\
    \midrule
    \textbf{Total}   & \textbf{3660} & \textbf{8697} \\
    \bottomrule
\end{tabular}%
}
\caption{Error categorization in the negative OpenMathReasoning and MMLU samples.}
\label{tab:error_category}
\vspace{-12pt}
\end{table}

Following~\citep{he2025can}, we observe that reasoning errors manifest in 9 major types and 22 subtypes. For each \textit{negative} trajectory in OpenMathReasoning and the MMLU training set, we use Gemini-2.5-Pro~\citep{comanici2025gemini} to assign an error label (the prompt is in Appendix~\ref{app:prompt}). Table~\ref{tab:error_category} shows a broad and diverse distribution that spans logical mistakes, comprehension errors, and other failure modes. This diversity implies that \textit{negatives} cover substantially more heterogeneous reasoning patterns than \textit{positives}, which tend to follow more uniform solution templates. The full label definitions are provided in Appendix~\ref{app:cat}.

\textbf{\textit{Negatives} improve OOD generalization by exposing the model to diverse error regimes, which encourages invariant reasoning features.}
We view error categories in \textit{negatives} as environments in the sense of IRM~\citep{arjovsky2019invariant} (formalized in Appendix~\ref{app:irm_view}), where generalization benefits from signals that remain stable across heterogeneous environments. Each error type defines a distinct failure regime. \textit{Negatives} are not pure noise, since many trajectories contain partially valid reasoning segments (Figure~\ref{fig:case_study_neg_path}), and performance continues to improve over epochs when training on \textit{negatives} (Table~\ref{tab:epoch_sweep_pos_neg}). This diversity compels the model to learn invariant features stable across distinct regimes, whereas positives cover fewer paths and offer weaker incentives for such stability.
\subsection{Training Perspective}
\label{sec:analysis_train}
To characterize learning dynamics, we log training loss every 10 steps for models fine-tuned on \textit{positives} and \textit{negatives} from math reasoning and MMLU. We use Qwen2.5-32B as a representative example (Figure~\ref{fig:intro:b}) with additional training curves are deferred to Appendix~\ref{app:loss}. Across settings, the loss exhibits a consistent stage-wise pattern. With \textit{positives}, loss drops abruptly near epoch boundaries and converges faster early on. With \textit{negatives}, loss decreases more smoothly and gradually, yet converges to a comparable level.

We attribute the loss disparity to signal diversity: homogeneous \textit{positives} drive rapid early drops via redundant updates, whereas heterogeneous \textit{negatives} induce steadier, broader progress. Table~\ref{tab:avg-loss-delta} confirms this early gap ($\Delta_{\text{pos}} > \Delta_{\text{neg}}$). This sustained descent reflects reduced shortcut fitting, aligning with the superior OOD generalization of \textit{negatives}.

Importantly, loss on \textit{negatives} keeps decreasing throughout training (Figures~\ref{fig:intro:b} and~\ref{fig:all}) and is accompanied by steady gains on benchmarks at multiple training checkpoints (Table~\ref{tab:epoch_sweep_pos_neg} and Appendix~\ref{app:progress}). This indicates that \textit{negatives} provide learnable supervision rather than noise. They combine partially valid reasoning with diverse failure patterns, yielding sustained training signals and promoting robust reasoning over memorizing narrow solution templates.


\textbf{Overall, these results indicate that the training value of \textit{negatives} lies in their diversity: they slow early loss descent while providing heterogeneous optimization signals that broaden reasoning patterns and improve OOD generalization.}

\begin{table}[htbp]
\centering

\resizebox{\columnwidth}{!}{%
\begin{tabular}{lcccc}
\toprule
\multicolumn{1}{c}{Model} &
\multicolumn{1}{c}{$\Delta_{\text{avg\_loss}}^{\text{epoch }2-1}$} &
\multicolumn{1}{c}{$\Delta_{\text{avg\_loss}}^{\text{epoch }3-2}$} &
\multicolumn{1}{c}{$\Delta_{\text{avg\_loss}}^{\text{epoch }4-3}$} &
\multicolumn{1}{c}{$\Delta_{\text{avg\_loss}}^{\text{epoch }5-4}$} \\
\midrule
Qwen2.5-3B   & 0.014957 & 0.013486 & 0.015686 & 0.014000 \\
Qwen2.5-7B   & 0.009729 & 0.022514 & 0.014172 & 0.001156 \\
Qwen2.5-14B  & 0.008515 & 0.017786 & 0.011157 & 0.005472 \\
Qwen2.5-32B  & 0.007143 & 0.018200 & 0.015557 & 0.003772 \\
Llama3.1-8B  & 0.015586 & 0.023344 & 0.005571 & 0.004915 \\
\bottomrule
\end{tabular}%
}
\caption{Comparison of per-epoch loss drops under \textit{positive-only} and \textit{negative-only} SFT on MMLU. Each entry reports $\Delta_{\text{pos}}-\Delta_{\text{neg}}$, where $\Delta$ is the average loss decrease within an epoch. Interpretation focuses on relative differences across epochs.}
\label{tab:avg-loss-delta}
\vspace{-12pt}
\end{table}

\subsection{Inference Perspective}
\label{sec:analysis_infer}

We examine how \textit{negative} supervision changes inference behavior. We use token-level policy entropy as a proxy for uncertainty and exploration during reasoning. Let $M_{\text{pos}}$ be the model fine-tuned on \textit{positives} from OpenMathReasoning, and $M_{\text{neg}}$ be the model fine-tuned on \textit{negatives}. To evaluate both in-domain and OOD behavior, we distill reference trajectories from Qwen3-8B on an in-domain math set (``Math'') and an OOD set (``Other''). We define the thinking span as tokens between \texttt{<think>} and \texttt{</think>}, and the answer span as tokens after \texttt{</think>}. We compute entropy under two protocols. \textbf{Off-policy} evaluates entropy along the teacher trajectory (teacher forcing). \textbf{On-policy} evaluates entropy along the model’s own generated trajectory under a fixed decoding rule. Entropy is computed from raw logits with $T{=}1$ and includes special boundary tokens.


\begin{figure}[t]
  \centering
  \includegraphics[width=\columnwidth]{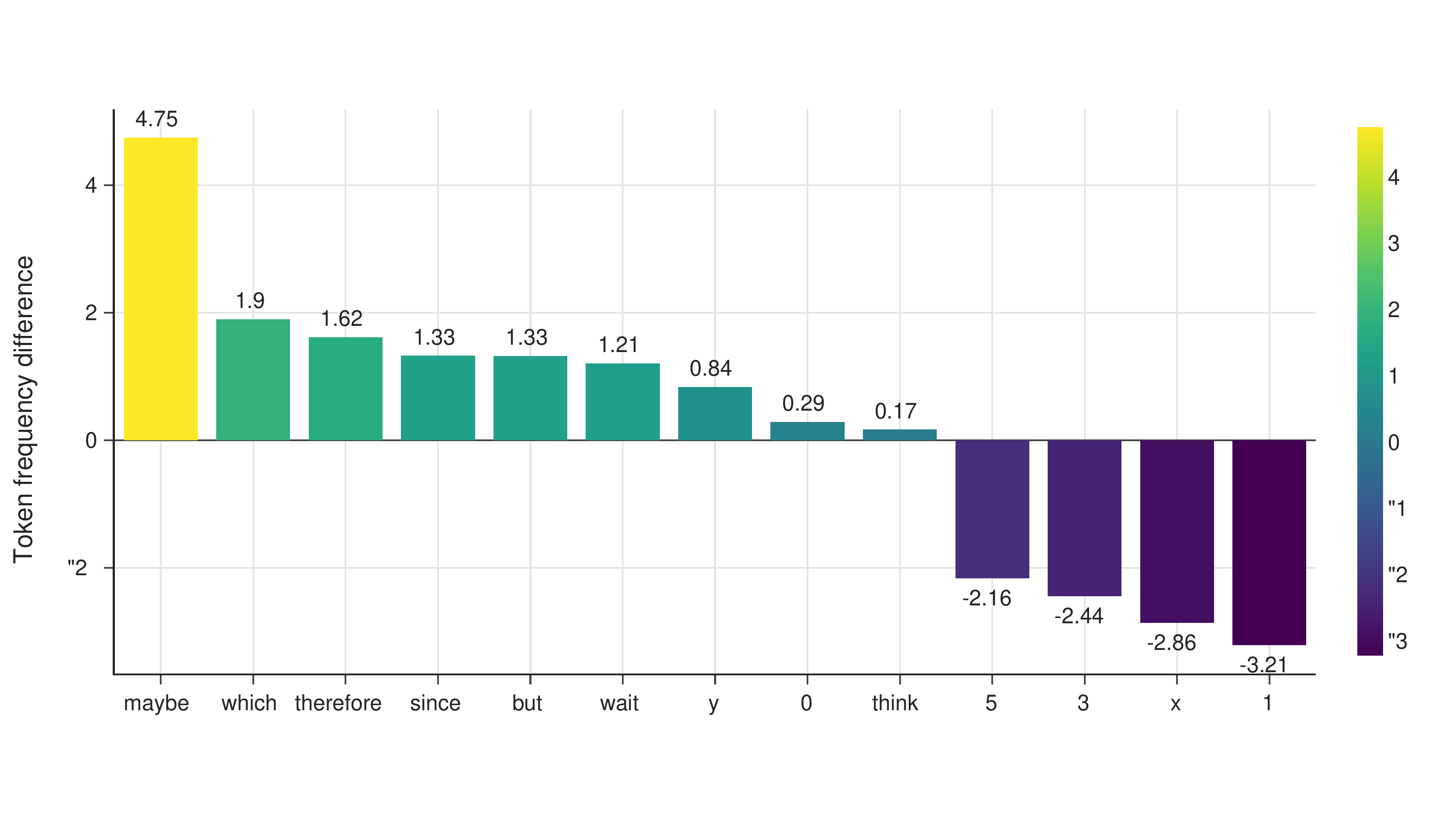}
  \caption{Token frequency differences between $M_{\text{neg}}$ and $M_{\text{pos}}$ on digits and high-entropy tokens.}
  \label{fig:freq_entropy}
\end{figure}

\begin{table}[t]
  \small
  \centering
  
  \resizebox{\columnwidth}{!}{
  
  \renewcommand{\arraystretch}{1.0}
  \begin{tabular}{cccccc}
\toprule
\textbf{Model} & \textbf{Setting} & \textbf{Data} &
$\bar H_{\text{think}}$ & $\bar H_{\text{ans}}$ & $\Delta H$ \\
\midrule
\multirow{4}{*}{$M_{\text{pos}}$}
  & \multirow{2}{*}{Off-policy} & Math  & 0.909 & 0.708 & 0.202 \\
  &                              & Other & 1.138 & 0.873 & 0.265 \\
\cmidrule(lr){2-6}
  & \multirow{2}{*}{On-policy}  & Math  & 0.753 & 0.601 & 0.153 \\
  &                              & Other & 0.669 & 0.757 & -0.088 \\
\midrule
\multirow{4}{*}{$M_{\text{neg}}$}
  & \multirow{2}{*}{Off-policy} & Math  & 1.212 & 0.883 & 0.329 \\
  &                              & Other & 1.427 & 0.992 & 0.435 \\
\cmidrule(lr){2-6}
  & \multirow{2}{*}{On-policy}  & Math  & 1.011 & 0.772 & 0.239 \\
  &                              & Other & 0.917 & 0.783 & 0.134 \\
\bottomrule
\end{tabular}
  }
    \caption{Policy entropy analysis on $M_{\text{pos}}$ and $M_{\text{neg}}$.}
  \label{tab:policy-entropy}
\vspace{-8pt}
\end{table}

Formally, let $\mathcal{V}$ be the vocabulary and $\theta$ the model parameters. The token-level entropy at step $t$ is
\begin{equation}
\label{eq:token-entropy}
\begin{aligned}
p_t(v) &\triangleq p_\theta\!\left(v \mid x, y_{<t}\right), \\
H_t(\theta \mid x, y_{<t})
&= - \sum_{v\in\mathcal V} p_t(v)\log p_t(v).
\end{aligned}
\end{equation}

where $p_\theta(\cdot \mid x, y_{<t})$ is the softmax distribution induced by the pre-softmax logits. For sample $i$, let $\mathcal{T}^{(i)}{\text{think}}$ and $\mathcal{T}^{(i)}{\text{ans}}$ denote token indices in the thinking and answer spans, determined by the teacher trajectory (off-policy) or the model trajectory (on-policy). We report mean span entropy:
\begin{equation}
\label{eq:span-entropy}
\begin{aligned}
\bar H_{\text{think}}^{(i)}
&= \frac{1}{\left|\mathcal T_{\text{think}}^{(i)}\right|}
\sum_{t \in \mathcal T_{\text{think}}^{(i)}} H_t, \\
\bar H_{\text{ans}}^{(i)}
&= \frac{1}{\left|\mathcal T_{\text{ans}}^{(i)}\right|}
\sum_{t \in \mathcal T_{\text{ans}}^{(i)}} H_t .
\end{aligned}
\end{equation}

and the boundary drop:
\begin{equation}
\Delta H^{(i)}=\bar H{\text{think}}^{(i)} - \bar H_{\text{ans}}^{(i)}.
\label{eq:delta-entropy}
\end{equation}
As presented in Table~\ref{tab:policy-entropy}, $M_{\text{neg}}$ maintains higher entropy throughout the thinking span and displays a sharper decline at the answer boundary. This dynamic reflects a strategy of broad exploration followed by decisive commitment, which correlates with its robust cross-domain transfer. Regarding baselines, off-policy entropy is inherently higher because teacher forcing exposes the model to contexts that can have low probability under its own policy. Crucially, however, the models exhibit contrasting behaviors under distribution shift. While $M_{\text{neg}}$ remains robust, $M_{\text{pos}}$ suffers a structural collapse on OOD data, where the entropy margin even reverses, indicating in-domain overfitting.

We further localize where uncertainty concentrates. Figure~\ref{fig:freq_entropy} compares high-entropy token usage in generated trajectories. Relative to $M_{\text{pos}}$, $M_{\text{neg}}$ produces more discourse and hesitation tokens (e.g., “maybe,” “wait,” “but”) and fewer numerals, indicating more budget allocated to connective exploration before committing to concrete computation. Figure~\ref{fig:case_study} illustrates the same effect qualitatively. During inference, $M_{\text{neg}}$ maintains more plausible continuations and explores more reasoning paths before settling on an answer.

\textbf{Overall, these results indicate that \textit{negative}-based supervision induces higher-entropy, more exploratory yet ultimately more decisive reasoning policies at inference time, which supports more robust cross-domain generalization.}

\begin{table*}[h]
\centering

\renewcommand{\arraystretch}{1.3}
\resizebox{\textwidth}{!}{
\begin{tabular}{ll|cccc|c|ccc|c|cc|c}
\hline
\multicolumn{2}{c|}{ }
 & \multicolumn{5}{c|}{Math Reasoning (In-Domain)}
 & \multicolumn{4}{c|}{General Reasoning (Out-of-Domain)}
 & \multicolumn{3}{c}{Other Reasoning (Out-of-Domain)} \\
\hline
Model & Setting
 & Math500 & Minerva & Olympia & AMC & \multicolumn{1}{c|}{Avg.}
 & MMLU & MMLU-Pro & BBH & \multicolumn{1}{c|}{Avg.}
 & ACPBench & HeadQA & Avg. \\
\hline

\multirow{2}{*}{Qwen2.5-3B}
 & Full
                & 60.80 & 26.10 & 23.26 & 35.00 & 36.29
                & 64.13 & \textbf{38.66} & 52.29 & 51.69 
                & 32.68 & 62.69 & 47.69 \\
\cline{2-14}
 & GLOW
                & \textbf{62.80} & \textbf{27.21} & \textbf{24.30} & \textbf{42.50} & \textbf{39.20} 
                & \textbf{64.49} & {38.63} & \textbf{53.20} & \textbf{52.11} 
                & \textbf{33.66} & \textbf{63.38} & \textbf{48.52} \\
\hline

\multirow{2}{*}{Qwen2.5-7B}
 & Full
                & 76.60 & 40.07 & 38.96 & 55.00 & 52.66 
                & 72.24 & 53.71 & 70.84 & 65.60 
                & 38.27 & 72.06 & 55.17 \\
\cline{2-14}
 & GLOW
                & \textbf{79.60} & \textbf{40.07} & \textbf{41.04} & \textbf{60.00} & \textbf{55.18} 
                & \textbf{73.99} & \textbf{55.77} & \textbf{71.99} & \textbf{67.25} 
                & \textbf{39.19} & \textbf{72.50} & \textbf{55.85} \\
\hline

\multirow{2}{*}{Qwen2.5-14B}
 & Full
                & 86.80 & 47.79 & 52.30 & 82.50 & 67.35 
                & 81.56 & 67.63 & 80.90 & 76.70 
                & 48.13 & 81.44 & 64.79 \\
\cline{2-14}
 & GLOW
                & \textbf{87.80} & \textbf{52.21} & \textbf{52.44} & \textbf{82.50} & \textbf{68.74} 
                & \textbf{82.53} & \textbf{68.70} & \textbf{81.65} & \textbf{77.63} 
                & \textbf{49.51} & \textbf{82.35} & \textbf{65.93} \\
\hline

\multirow{2}{*}{Qwen2.5-32B}
 & Full
                & 92.20 & 52.57 & 57.19 & 85.00 & 71.74 
                & 85.22 & 73.10 & 83.53 & 80.62
                & 50.67 & 84.90 & 67.79 \\
\cline{2-14}
 & GLOW
                & \textbf{93.40} & \textbf{54.41} & \textbf{59.11} & \textbf{92.50} & \textbf{74.86} 
                & \textbf{85.51} & \textbf{74.14} & \textbf{83.98} & \textbf{81.21} 
                & \textbf{51.97} & \textbf{85.19} & \textbf{68.58} \\
\hline

\multirow{2}{*}{Llama3.1-8B}
 & Full
                & 41.20 & 18.01 & 14.67 & 15.00 & 22.22
                & 62.48 & 36.88 & 55.12 & 51.49
                & 32.96 & 65.90 & 49.43 \\
\cline{2-14}
 & GLOW
                & \textbf{44.60} & \textbf{20.59} & \textbf{15.11} & \textbf{17.50} & \textbf{24.45} 
                & \textbf{63.80} & \textbf{38.34} & \textbf{58.17} & \textbf{53.44} 
                & \textbf{35.04} & \textbf{66.70} & \textbf{50.87} \\
\hline

\end{tabular}
}
\caption{Cross-domain performance of models trained on the \textbf{math reasoning} dataset. 
``Avg.'' denotes the average score within each group. \textbf{Bold} indicates the best results under the same model.}
\label{tab:exp_math}
\vspace{-3pt}
\end{table*}

\begin{table*}[h]

\renewcommand{\arraystretch}{1.3}
\resizebox{\textwidth}{!}{
\begin{tabular}{ll|cccc|c|ccc|c|cc|c}
\hline
\multicolumn{2}{c|}{ }
 & \multicolumn{5}{c|}{Math Reasoning (Out-of-Domain)} 
 & \multicolumn{4}{c|}{General Reasoning (In-Domain)} 
 & \multicolumn{3}{c}{Other Reasoning (Out-of-Domain)} \\
\hline
Model & Setting 
 & Math500 & Minerva & Olympia & AMC & \multicolumn{1}{c|}{Avg.} 
 & MMLU & MMLU-Pro & BBH & \multicolumn{1}{c|}{Avg.} 
 & ACPBench & HeadQA & Avg. \\
\hline
\multirow{2}{*}{Qwen2.5-3B} 
 & Full        & 58.20 & 23.16 & 25.19 & 35.00 & 35.39 
                & 66.74 & 40.82 & \textbf{53.35} & 53.64 
                & 35.70 & 67.61 & 51.66 \\
\cline{2-14}
 & GLOW 
                & \textbf{61.40} & \textbf{29.41} & \textbf{25.78} & \textbf{40.00} & \textbf{39.15} 
                & \textbf{67.09} & \textbf{41.27} & {52.61} & \textbf{53.66} 
                & \textbf{36.20} & \textbf{69.15} & \textbf{52.68} \\
\hline

\multirow{2}{*}{Qwen2.5-7B} 
 & Full        & 75.60 & 38.60 & 40.15 & 47.50 & 50.46 
                & 73.14 & \textbf{51.15} & 71.30 & 65.20 
                & 42.18 & 72.76 & 57.47 \\
\cline{2-14}
 & GLOW 
        & \textbf{78.20} & \textbf{41.18} & \textbf{43.70} & \textbf{60.00} & \textbf{55.77} 
        & \textbf{74.51} & {51.13} & \textbf{71.99} & \textbf{65.88} 
        & \textbf{43.56} & \textbf{75.35} & \textbf{59.46} \\
\hline

\multirow{2}{*}{Qwen2.5-14B} 
 & Full        & 82.20 & 43.01 & 51.85 & 70.00 & 61.77 
                & 78.13 & 59.57 & 80.56 & 72.75 
                & 48.87 & 79.94 & 64.41 \\
\cline{2-14}
 & GLOW 
        & \textbf{85.00} & \textbf{48.09} & \textbf{54.22} & \textbf{70.00} & \textbf{64.33} 
        & \textbf{79.97} & \textbf{62.78} & \textbf{82.32} & \textbf{75.02} 
        & \textbf{50.95} & \textbf{82.20} & \textbf{66.58} \\
\hline

\multirow{2}{*}{Qwen2.5-32B} 
 & Full        & 86.60 & 46.69 & 55.70 & 80.00 & 67.25 
                & 79.06 & 61.15 & 79.94 & 73.38
                & 49.89 & 83.01 & 66.45 \\
\cline{2-14}
 & GLOW
                & \textbf{89.00} & \textbf{47.06} & \textbf{58.67} & \textbf{82.50} & \textbf{69.31} 
                & \textbf{80.81} & \textbf{64.72} & \textbf{81.98} & \textbf{75.84} 
                & \textbf{52.08} & \textbf{83.73} & \textbf{67.91} \\
\hline

\multirow{2}{*}{Llama3.1-8B} 
 & Full        & 20.00 & 15.81 & 6.52 & 2.50 & 11.21 
                & 66.49 & 40.56 & 53.73 & 53.59
                & 36.06 & 69.55 & 52.81 \\
\cline{2-14}
 & GLOW
                & \textbf{24.80} & \textbf{20.59} & \textbf{6.96} & \textbf{12.50} & \textbf{16.21} 
                & \textbf{68.52} & \textbf{42.96} & \textbf{57.53} & \textbf{56.33} 
                & \textbf{39.72} & \textbf{72.57} & \textbf{56.15} \\
\hline

\end{tabular}
}
\centering
\caption{Cross-domain performance of models trained on the \textbf{general reasoning} dataset. 
``Avg.'' denotes the average score within each group. \textbf{Bold} indicates the best results under the same model.}
\label{tab:exp_mmlu}
\vspace{-12pt}
\end{table*}

\section{From Negatives to Effective Full-Data Training} 
In this section, we move beyond the empirical finding that \textit{negatives} improve OOD generalization. Training on \textit{negatives} alone remains a rejection-based strategy and still fails to use supervision efficiently. Our goal is to improve both in-domain and OOD performance while using data more effectively. We therefore target the training objective and propose a simple mechanism that adapts sample weights based on learning progress.

\subsection{GLOW: Gain-Based Loss Weighting}

Our analysis suggests that \textit{negatives} help by injecting optimization diversity, which broadens the learned reasoning space. 
This motivates reweighting SFT toward undercovered patterns. 
\ours{} quantifies each sample’s gain by its inter-epoch loss reduction. 
A small gain indicates limited effective coverage under the current trajectory. 
\ours{} then upweights such samples via an adaptive scaling function, steering updates toward complementary directions and improving generalization.

Let $\ell_i^{(t)}$ denote the loss of sample $i$ at epoch $t$. We quantify a sample’s learning progress as its inter-epoch loss reduction:
$\Delta_i^{(t)}=\ell_i^{(t-1)}-\ell_i^{(t)}$.
A small $\Delta_i^{(t)}$ indicates that the sample remains insufficiently learned and may encode underrepresented patterns, whereas a large $\Delta_i^{(t)}$ suggests diminishing marginal utility. We therefore upweight small-$\Delta$ samples via
\begin{equation}
w_i^{(t)} = \alpha \bigl(1 - \sigma\left( \beta \Delta_i^{(t)} \right)\bigr),
\label{eq:weight}
\end{equation}
where $\sigma(\cdot)$ is the sigmoid function and $\alpha,\beta$ are scaling hyperparameters. For the first epoch, we set $w_i^{(1)}=1$ for all samples. The reweighted objective is
\begin{equation}
\mathcal{L}^{(t)}_{\text{\ours{}}}(\theta)
=\sum_{i=1}^{N} w_i^{(t)}\,\ell_i(\theta).
\label{eq:resft}
\end{equation}

\paragraph{Why it works.}
The inter-epoch loss reduction $\Delta_i^{(t)}$ measures how much sample $i$ benefits from recent updates. Under standard $L$-smoothness, for an update $\theta'=\theta-\eta G^{(t)}$ with $G^{(t)}=\frac{1}{N}\sum_j w_j^{(t)}\nabla \ell_j(\theta)$, a first-order expansion gives
\[
\Delta_i^{(t)} \approx \ell_i(\theta)-\ell_i(\theta') \approx \eta \big\langle \nabla \ell_i(\theta),\, G^{(t)} \big\rangle,
\]
so $\Delta_i^{(t)}$ is closely tied to the alignment between the current update direction and the sample gradient. Thus, small $\Delta_i^{(t)}$ indicates patterns weakly covered by optimization, whereas large $\Delta_i^{(t)}$ suggests diminishing marginal utility. Since $w_i^{(t)}=\alpha\!\left(1-\sigma(\beta\Delta_i^{(t)})\right)$ is decreasing in $\Delta_i^{(t)}$, \ours{} prioritizes small-$\Delta$ samples to steer training toward complementary directions, increasing gradient diversity and exploration to improve generalization. See Appendix~\ref{app:proof} for detailed derivation.


\subsection{Discussion with Prior Works}
Parallel to our focus on negative trajectories, recent reasoning-oriented RL approaches also leverage negative signals, yet primarily to penalize undesired behaviors via reward structuring and credit assignment~\citep{zhu2025negative-reinforcement,liu2025erpo,yang2025unearthing,nan2025ngrpo}. In contrast, GLOW investigates \textit{negatives} within the SFT stage and establishes them as a source of direct supervision: rather than merely being suppressed, \textit{negatives} are utilized to broaden the reasoning space, thereby enhancing OOD generalization.

Regarding objective design, prior SFT reweighting typically targets optimization imbalance by utilizing current loss to down-weight easy samples, a process that is effectively memoryless~\cite{lin2017focal,bengio2009curriculum}. In contrast, GLOW targets coverage: it upweights samples exhibiting stagnant progress, directing optimization toward underexplored reasoning patterns.

\subsection{Experimental Results}
Building on the theoretical analysis, we empirically validate the effectiveness of~\ours{} in the SFT stage. All other experimental settings are the same as~\ref{sec:exp_setting} and details are described in Appendix~\ref{app:dataset}. 

\paragraph{\ours{} improves cross-domain generalization without sample filtering.}
We apply \ours{} to a randomly shuffled mixture of \textit{positives} and \textit{negatives} and observe consistent gains across domains and model scales. For brevity, we report only standard SFT on the mixed data and \ours{}. Results for \textit{positive-only} and \textit{negative-only} SFT are provided in Tables~\ref{tab:full_math} and~\ref{tab:full_mmlu}.
Table~\ref{tab:exp_math} demonstrates that \ours{} consistently enhances in-domain performance across all model scales and achieves superior OOD results. Specifically, for Qwen2.5-7B, \ours{} reaches 55.18 in-domain and 67.25 OOD while maintaining competitive general reasoning abilities. Similar gains are observed in models trained on general reasoning data. Further results in Table~\ref{tab:exp_mmlu} show that on Qwen2.5-14B, \ours{} boosts OOD math and reasoning performance by 2.56 and 2.17 points, respectively. Overall, \ours{} maximizes data utilization by learning from all trajectories, yielding robust gains across diverse benchmarks and settings. We also do the ablation of hyperparameters in Appendix~\ref{app:hyper}.


\paragraph{\ours{} tends to upweight \textit{negatives}.}
We further examine the samples prioritized by \ours{}. Appendix~\ref{app:neg_ratio} reveals that \ours{} prioritizes \textit{negatives} during early training stages, suggesting that gain-based weights primarily target harder and under-represented reasoning patterns.

\begin{table}[t]
\centering
\small
\setlength{\tabcolsep}{8pt}
\renewcommand{\arraystretch}{0.8}
\resizebox{1.0\linewidth}{!}{
\small
\begin{tabular}{l c l c c c}
\toprule
\textbf{Setting} & \textbf{Train} & \textbf{Test} &
$\bar H_{\text{think}}$ & $\bar H_{\text{ans}}$ & $\Delta H$ \\
\midrule
\multirow{4}{*}{Full}
  & \multirow{2}{*}{Math} & Math  & 0.36 & 0.22 & 0.14 \\
  &                        & Other & 1.24 & 1.38 & -0.14 \\
\cmidrule(lr){2-6}
  & \multirow{2}{*}{MMLU} & Math  & 0.54 & 0.34 & 0.20 \\
  &                        & Other & 0.96 & 0.98 & -0.02 \\
\midrule
\multirow{4}{*}{GLOW}
  & \multirow{2}{*}{Math} & Math  & 0.71 & 0.35 & 0.36 \\
  &                        & Other & 1.52 & 1.30 & 0.22 \\
\cmidrule(lr){2-6}
  & \multirow{2}{*}{MMLU} & Math  & 0.89 & 0.52 & 0.37 \\
  &                        & Other & 1.44 & 1.21 & 0.23 \\
\bottomrule
\end{tabular}
}
\caption{Policy entropy changes with and without GLOW under various settings.}
\label{tab:glow_entropy}
\vspace{-5pt}
\end{table}

\paragraph{\ours{} encourages exploration while preserving answer commitment.}
Table~\ref{tab:glow_entropy} shows \ours{} consistently increases thinking span entropy across settings (e.g., 0.36 to 0.71 from Math to Math, and 0.96 to 1.44 from MMLU to Other domains), while answer span entropy remains stable or decreases under OOD.
This suggests \ours{} encourages broader exploration during reasoning while keeping answers relatively decisive, consistent with its generalization gains.



\paragraph{\ours{} serves as a superior initialization for subsequent RL training.}
\begin{table}[t]
\centering
\resizebox{\columnwidth}{!}{%
\begin{tabular}{lccccc}
\toprule
\textbf{Setting} &   \textbf{Math500} & \textbf{Minerva} &   \textbf{AMC} & \textbf{MMLU} & \textbf{MMLU-Pro} \\
\midrule
SFT &  76.60 & 40.07 &  55.00 & 72.24 & 53.71 \\
\ours{}   &  79.60 & 40.07  & \textbf{60.00} & 73.99 & 55.77 \\
\midrule
SFT + RL  & 78.20 & 40.24 & 52.50 & 72.82 & 53.95 \\
\ours{} + RL & \textbf{80.20} & \textbf{42.28} &  57.50 & \textbf{76.47} & \textbf{57.37} \\
\bottomrule
\end{tabular}%
}
\caption{Controlled comparison of post-training on GSM8K for Qwen2.5-7B-base. \ours{} improves OOD metrics both before and after RL, indicating a stronger initialization for RL post-training.}
\label{tab:rl_init}
\vspace{-13pt}
\end{table}
Starting from Qwen2.5-7B-base, we train on GSM8K with four settings: (i) standard SFT, (ii) standard SFT followed by RL post-training, (iii) SFT with \ours{}, and (iv) \ours{}-based SFT followed by the same RL post-training. 
Using GRPO~\cite{shao2024deepseekmathpushinglimitsmathematical} with fixed RL data, optimizer, and hyperparameters, we vary only the SFT objective to isolate its impact. Table~\ref{tab:rl_init} shows that \ours{} improves OOD performance before RL and remains superior after RL, outperforming the RL model initialized from standard SFT. 
This indicates \ours{} yields stronger SFT initialization that transfers to RL post-training.


\section{Conclusion}

We show that negative reasoning trajectories can improve SFT generalization and mitigate OOD degradation. Through data, training, and inference analyses, we identify why \textit{negatives} help and how they shape optimization and model behavior. Building on these findings, we propose Gain-based LOss Weighting (\ours{}), which upweights undercovered examples using inter-epoch loss reduction, yielding more data-efficient training and consistent cross-domain gains across diverse benchmarks.


\section*{Limitations}
Our study primarily examines gain-based reweighting in the supervised fine-tuning stage of reasoning post-training, and we leave its interaction with subsequent RLHF or other reinforcement learning stages as an exciting direction for future work. In addition, our experiments focus on text-only chain-of-thought data for math and multi-task knowledge benchmarks with a small set of open-source backbones, so a natural next step is to extend the same analysis and method to broader task families, larger model scales and multimodal or tool-augmented settings, building on the phenomena and gains established in this work.


\bibliography{custom}

\clearpage
\appendix

\section{Appendix}
\subsection{Experiments Setup}\label{app:dataset}
\paragraph{Distillation data curation}
We conduct experiments on mathematical reasoning and common sense, using Qwen3-8B~\citep{yang2025qwen3} to distill reasoning trajectories. For mathematics, we collect data from OpenMathReasoning~\citep{moshkov2025aimo2}, and for common sense from MMLU~\citep{hendryckstest2021,hendrycks2021ethics}. Each trajectory is labeled as \textit{positive} if the final answer matches the ground truth and \textit{negative} otherwise. To ensure that all samples preserve complete reasoning structures and differ only in correctness, we discard instances exceeding 8,192 tokens. We then sample \textit{positive} and \textit{negative} data in a 1:1 ratio, resulting in 7.2k instances for mathematics and 17.4k for common sense.

\paragraph{Training Details}
We conduct experiments on the Qwen2.5 series (3B, 7B, 14B, 32B)~\citep{qwen2.5} and LLaMA-3.1-8B\citep{dubey2024llama}. All models are fine-tuned for 20 epochs with a batch size of 128, using a cosine learning rate scheduler with 10\% warm-up steps and a maximum learning rate of $5 \times 10^{-5}$. We set the training length to 20 epochs, as the loss does not converge earlier and benchmark performance continues to improve up to this point.

\paragraph{Evaluation Details}
Following~\citet{huan2025does,yuan2025incentivizing}, we evaluate models on three categories of benchmarks: (1)~\textbf{mathematical reasoning}: MATH500~\citep{hendrycks2024measuring}, OlympiaBench~\citep{he2024olympiadbench}, MinervaMath~\citep{lewkowycz2022solving}, and the competition-level AMC2023~\citep{amc23dataset}; (2)~\textbf{common sense reasoning}: MMLU, MMLU-Pro~\citep{wang2024mmlu}, and BBH~\citep{suzgun2022challenging}; (3)~\textbf{other OOD reasoning}: ACPBench~\citep{kokel2025acpbench} for planning, and HeadQA~\citep{vilares2019head} for medicine. Model performance is measured by accuracy. Evaluation uses the codebase from \citep{yuan2025incentivizing}, with sampling temperature 0.6, top-p 0.95, one sample per input, and max generation length 32,768 tokens.

We define in-domain and out-of-domain (OOD) evaluation based on the training data distribution. For models fine-tuned on mathematical reasoning tasks, in-domain evaluation uses mathematical problems while OOD evaluation employs other task categories. Conversely, models trained on MMLU are evaluated in-domain on commonsense tasks and OOD on the remaining domains. We compare three training strategies: using only \textit{positive} samples, only \textit{negative} samples, and a balanced combination of both.

Artifact Licenses and Intended Use: The models, the evaluation benchmarks and datasets are public artifacts. We utilize them in strict accordance with their respective licenses. Our use of these artifacts for SFT and reasoning evaluation is consistent with their intended use for scientific research.

\subsection{Detailed Theoretical Derivation}
\label{app:proof}

We provide a theoretical framework to motivate the dynamic reweighting mechanism in Eq.~\ref{eq:weight}. Under idealized smoothness and stability assumptions, our derivation suggests that GLOW can improve optimization conditioning. The core intuition is that a sample’s short-horizon loss reduction acts as a proxy for the alignment between its gradient and the current update direction. Consequently, assuming low-gain samples align with undercovered subspaces, upweighting them adds positive semidefinite curvature along complementary directions. This potentially increases the spectrum of the weighted Fisher proxy, improves local conditioning, and reduces algorithmic sensitivity.
\subsubsection{Setup and Notation}
We analyze a single update step. Let $\theta$ be the current parameters and
$\ell_i(\theta)$ the per-sample loss. Define
\[
g_i \triangleq \nabla_\theta \ell_i(\theta),
\qquad
G \triangleq \frac{1}{N}\sum_{i=1}^N w_i\, g_i ,
\]
where $w_i \ge 0$ are the weights used in the current step.
The update is
\[
\theta^{'} = \theta - \eta\, G.
\]
We also define the weighted surrogate objective
\[
R_w(\theta) \triangleq \frac{1}{N}\sum_{i=1}^N w_i\,\ell_i(\theta),
\]
where the weights $\{w_i\}$ are held fixed during this step.

The weighted empirical Fisher proxy at $\theta$ is
\[
F_w(\theta) \triangleq \frac{1}{N}\sum_{i=1}^N w_i\, g_i g_i^\top,
\]
and we abbreviate $F_w(\theta)$ to $F_w$ when the dependence is clear.



\subsubsection{Notation and Standing Assumptions}
\begin{assumptionenv}[Smoothness, boundedness, and curvature injection]
\leavevmode
\begin{enumerate}[label=(A\arabic*)]
  \item Each $\ell_i(\theta)$ is twice differentiable and $L$-smooth, namely
        $\| \nabla^2_\theta \ell_i(\theta)\|_{\mathrm{op}} \le L$ for all $i$ and $\theta$.
  \item Gradients are uniformly bounded: $\|g_i(\theta)\|_2 \le G_{\max}$.
  \item The learning rate $\eta$ is small enough so that second-order remainders
        in Taylor expansions are controlled by $L$.
  \item (Fisher Hessian closeness for the surrogate) At the iterates where the analysis
        is applied, the Hessian of $R_w$ satisfies
        \[
        \bigl\|\nabla^2_\theta R_w(\theta) - F_w(\theta)\bigr\|_{\mathrm{op}} \le \delta.
        \]
  \item (Coverage on low-curvature directions) Let $U$ be a $k$-dimensional
        subspace with projector $P_U$ that captures low-curvature directions of $F_w$
        (e.g., the span of the $k$ smallest-eigenvalue eigenvectors of $F_w$).
        Intuitively, the reweighting rule upweights samples with small gain, whose gradients
        are weakly aligned with the current update direction and thus tend to contribute
        complementary curvature along undercovered directions.
        Suppose the rule increases weights on a set $T$ by increments $\delta w_i \ge 0$,
        inducing
        \[
        \Delta F \triangleq \frac{1}{N}\sum_{i\in T} \delta w_i\, g_i g_i^\top .
        \]
        We assume this update provides nontrivial coverage on $U$:
        \[
        P_U \Delta F P_U \succeq \frac{\gamma}{k}\,P_U .
        \]

\end{enumerate}
\end{assumptionenv}

\subsubsection{From Gain to Gradient Alignment}
\begin{lemma}[Loss reduction and gradient alignment]
\label{lemma:delta}
Under (A1)--(A3), after the update $\theta^{'}=\theta-\eta G$, we have
\[
\Delta_i
\triangleq
\ell_i(\theta) - \ell_i(\theta^{'})
=
\eta\, g_i^\top G - \frac{1}{2}\eta^2\, G^\top H_i(\xi_i)\, G
\]
for some $\xi_i$ on the line segment between $\theta$ and $\theta^{'}$, where
$H_i(\xi_i)=\nabla^2_\theta \ell_i(\xi_i)$. Moreover,
\[
\Big|\Delta_i - \eta\, g_i^\top G\Big|
\le
\frac{1}{2}L\eta^2 \|G\|_2^2.
\]
\end{lemma}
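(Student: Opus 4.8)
The plan is to derive the identity by a second-order Taylor expansion of $\ell_i$ along the update direction, and then obtain the inequality by bounding the quadratic remainder via the $L$-smoothness assumption (A1).

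First I would reduce to one dimension. Fix a sample index $i$ and set $\phi(s)\triangleq\ell_i(\theta-s\eta G)$ for $s\in[0,1]$, so that $\phi(0)=\ell_i(\theta)$ and $\phi(1)=\ell_i(\theta')$. Since each $\ell_i$ is twice differentiable by (A1), $\phi\in C^2([0,1])$, and the chain rule gives $\phi'(s)=-\eta\,\nabla\ell_i(\theta-s\eta G)^\top G$ and $\phi''(s)=\eta^2\,G^\top H_i(\theta-s\eta G)\,G$, where $H_i(\cdot)=\nabla^2_\theta\ell_i(\cdot)$ and, in particular, $\nabla\ell_i(\theta)=g_i$.

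Next I would apply Taylor's theorem with Lagrange remainder to $\phi$ on $[0,1]$: there exists $s^\star\in(0,1)$ with $\phi(1)=\phi(0)+\phi'(0)+\tfrac{1}{2}\phi''(s^\star)$. Substituting $\phi'(0)=-\eta\,g_i^\top G$ and $\phi''(s^\star)=\eta^2\,G^\top H_i(\xi_i)\,G$ with $\xi_i\triangleq\theta-s^\star\eta G$, which lies on the segment between $\theta$ and $\theta'$, yields $\ell_i(\theta')=\ell_i(\theta)-\eta\,g_i^\top G+\tfrac{1}{2}\eta^2\,G^\top H_i(\xi_i)\,G$; rearranging for $\Delta_i=\ell_i(\theta)-\ell_i(\theta')$ gives the claimed identity. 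Finally, for the bound I would invoke (A1) once more: $\|H_i(\xi_i)\|_{\mathrm{op}}\le L$ implies $|G^\top H_i(\xi_i)\,G|\le\|H_i(\xi_i)\|_{\mathrm{op}}\|G\|_2^2\le L\|G\|_2^2$, hence
\[
\bigl|\Delta_i-\eta\,g_i^\top G\bigr|=\tfrac{1}{2}\eta^2\bigl|G^\top H_i(\xi_i)\,G\bigr|\le\tfrac{1}{2}L\eta^2\|G\|_2^2 .
\]

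There is essentially no hard step here: the argument is a textbook second-order expansion. The only points needing care are performing the multivariate expansion cleanly — handled by the one-dimensional reduction above, so the Lagrange form applies verbatim — and being explicit about which hypotheses are actually used: (A1) supplies both the existence of the Hessian along the segment and the operator-norm bound, while (A2) and (A3) only fix the regime (boundedness $\|G\|_2\le G_{\max}$ through (A2), and smallness of $\eta$ so that the remainder is a genuine lower-order correction rather than dominating), and are not required for the stated equality or inequality themselves.
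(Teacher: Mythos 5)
Your proof is correct and takes the same approach the paper sketches in one line (second-order Taylor expansion around $\theta$, then bound the remainder by $L$-smoothness); you simply fill in the details via the standard one-dimensional reduction $\phi(s)=\ell_i(\theta-s\eta G)$ and Lagrange's form of the remainder. Your closing remark correctly notes that (A2)--(A3) are not actually needed for the lemma as stated, which is a fair observation the paper leaves implicit.
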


\begin{proof}
A second-order Taylor expansion of $\ell_i(\theta-\eta G)$ around $\theta$
gives the stated expression, and $L$-smoothness bounds the remainder.
\end{proof}

Lemma~\ref{lemma:delta} implies that, up to a controlled second-order term,
the gain $\Delta_i$ is large when $g_i$ aligns with the update direction $G$,
and small when $g_i$ is weakly aligned. Therefore, using small inter-epoch gain
as a signal to upweight samples is consistent with prioritizing directions that
are undercovered by recent optimization.


\subsubsection{PSD Augmentation of the Fisher Proxy}

\begin{lemma}[Positive weight increments induce PSD augmentation]
\label{lemma:deltaF}
If weights change by increments $\delta w_i \ge 0$ for $i\in T$, then the induced
change in the weighted Fisher is
\[
\Delta F = \frac{1}{N}\sum_{i\in T} \delta w_i\, g_i g_i^\top,
\]
which is positive semidefinite. Consequently, the updated Fisher
$F_w' = F_w + \Delta F$ satisfies $F_w' \succeq F_w$, and its eigenvalues
are monotonically nondecreasing.
\end{lemma}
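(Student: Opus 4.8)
\textbf{Proof proposal for Lemma~\ref{lemma:deltaF}.}

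The plan is to verify the two claimed properties directly from the definition of $\Delta F$, using only elementary facts about positive semidefinite (PSD) matrices. First I would establish that each rank-one summand $g_i g_i^\top$ is PSD: for any test vector $v$, we have $v^\top (g_i g_i^\top) v = (g_i^\top v)^2 \ge 0$. Since the increments satisfy $\delta w_i \ge 0$ by hypothesis and the normalization $1/N$ is a positive scalar, $\Delta F = \frac{1}{N}\sum_{i\in T}\delta w_i\, g_i g_i^\top$ is a nonnegative combination of PSD matrices, hence itself PSD. This is the content of the closure of the PSD cone under addition and nonnegative scaling.

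Next I would derive the ordering $F_w' \succeq F_w$. By definition $F_w' = F_w + \Delta F$, so for any $v$ we have $v^\top F_w' v - v^\top F_w v = v^\top \Delta F\, v \ge 0$ by the previous step; this is precisely the statement $F_w' - F_w \succeq 0$, i.e.\ $F_w' \succeq F_w$ in the Loewner order. Finally, for the eigenvalue claim I would invoke the Courant--Fischer (min-max) characterization: for each index $j$,
\[
\lambda_j(F_w') = \max_{\dim(S)=j}\ \min_{v\in S,\,\|v\|=1} v^\top F_w' v
\ge \max_{\dim(S)=j}\ \min_{v\in S,\,\|v\|=1} v^\top F_w\, v = \lambda_j(F_w),
\]
where the inequality holds termwise inside the min-max because $v^\top F_w' v \ge v^\top F_w v$ pointwise. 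Thus every eigenvalue is nondecreasing under the augmentation.

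Since each step is a one-line application of a standard fact, I do not anticipate a genuine obstacle; the only point requiring mild care is making explicit that ``eigenvalues are monotonically nondecreasing'' refers to the ordered spectrum (e.g.\ $\lambda_1 \ge \lambda_2 \ge \cdots$) compared index-by-index, which is exactly what Courant--Fischer delivers, rather than any claim about individual eigenvectors being preserved. I would also note in passing that this lemma does not itself use Assumptions (A1)--(A3) or the coverage condition (A5); those enter only when one wants the stronger quantitative statement that the \emph{smallest} eigenvalues strictly increase, which is taken up separately using $P_U \Delta F P_U \succeq \frac{\gamma}{k} P_U$.
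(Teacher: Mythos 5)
Your proof is correct and follows essentially the same route as the paper: show each rank-one term is PSD, conclude $\Delta F\succeq 0$ from nonnegative combination, and deduce the Loewner ordering. The only cosmetic difference is that you spell out the eigenvalue monotonicity via the Courant--Fischer min-max characterization, whereas the paper simply cites ``standard Weyl-type inequalities''; these are the same fact, and your version supplies the one-line justification the paper leaves implicit.
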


\begin{proof}
Each $g_i g_i^\top$ is symmetric and positive semidefinite. With $\delta w_i\ge 0$,
every term $\delta w_i g_i g_i^\top$ is positive semidefinite, hence so is their average
$\Delta F$. Thus $F_w' = F_w + \Delta F \succeq F_w$, and eigenvalue monotonicity follows
from standard Weyl-type inequalities.
\end{proof}
\subsubsection{Improving Low-Curvature Directions}

Lemma~\ref{lemma:deltaF} guarantees a PSD augmentation $F_w' = F_w + \Delta F$,
but PSD alone does not ensure improved curvature along the bottleneck directions:
$\Delta F$ could concentrate on already well-conditioned directions and leave the
low-curvature subspace unchanged. Assumption (A5) rules out this degeneracy by
requiring the reweighting update to provide nontrivial coverage on $U$, which is
consistent with upweighting small-gain samples whose gradients are weakly aligned
with the current update and tend to contribute complementary directions.

\begin{lemma}[Improvement on a $k$-dimensional subspace]
\label{lemma:rank_gain}
Let $U$ be a $k$-dimensional subspace with projector $P_U$. Under (A5),
\[
\lambda_{\min}(F_w'|_U)
\ge
\lambda_{\min}(F_w|_U) + \frac{\gamma}{k},
\]
where $F_w|_U$ denotes the restriction of $F_w$ to $U$.
\end{lemma}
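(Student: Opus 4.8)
The plan is to prove Lemma~\ref{lemma:rank_gain} by restricting the two PSD operators $F_w$ and $F_w' = F_w + \Delta F$ to the subspace $U$ and exploiting the coverage hypothesis (A5) together with the additivity of the restriction map. First I would fix an orthonormal basis for $U$ and write $A \triangleq P_U F_w P_U$ and $A' \triangleq P_U F_w' P_U$ as operators on $U$; by Lemma~\ref{lemma:deltaF}, $\Delta F \succeq 0$, so $A' = A + P_U \Delta F P_U$ with $P_U \Delta F P_U \succeq 0$. The key structural observation is that restriction by a projector is a linear, positive map, so lower bounds on the summands transfer to lower bounds on the sum.

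The main step is a Weyl-type eigenvalue inequality on $U$. For any Hermitian $B, C$ on a $k$-dimensional space, $\lambda_{\min}(B+C) \ge \lambda_{\min}(B) + \lambda_{\min}(C)$. Applying this with $B = A$ and $C = P_U \Delta F P_U$ gives
\[
\lambda_{\min}(A') \;\ge\; \lambda_{\min}(A) + \lambda_{\min}\!\left(P_U \Delta F P_U\right).
\]
Assumption (A5) states $P_U \Delta F P_U \succeq \tfrac{\gamma}{k} P_U$, which as an operator inequality on $U$ (where $P_U$ acts as the identity) means $\lambda_{\min}(P_U \Delta F P_U) \ge \tfrac{\gamma}{k}$. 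Substituting yields $\lambda_{\min}(F_w'|_U) \ge \lambda_{\min}(F_w|_U) + \tfrac{\gamma}{k}$, which is exactly the claim once we identify $F_w|_U$ with $A$ and $F_w'|_U$ with $A'$ under the chosen basis.

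I expect the only real subtlety — not so much an obstacle as a point needing care — is making precise what ``restriction $F_w|_U$'' means and confirming that the inequality $P_U \Delta F P_U \succeq \tfrac{\gamma}{k} P_U$ in ambient space is equivalent to $\lambda_{\min} \ge \tfrac{\gamma}{k}$ for the compressed operator on $U$; this follows because for $u \in U$ we have $u^\top P_U \Delta F P_U\, u = u^\top \Delta F\, u$ and $u^\top P_U u = \|u\|^2$, so the Rayleigh quotients agree. The additive Weyl bound $\lambda_{\min}(B+C)\ge\lambda_{\min}(B)+\lambda_{\min}(C)$ is standard (it is immediate from the variational characterization $\lambda_{\min}(M) = \min_{\|v\|=1} v^\top M v$), so no heavy machinery is required. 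The proof is therefore short: invoke Lemma~\ref{lemma:deltaF} for PSD-ness of $\Delta F$, compress to $U$, apply the additive eigenvalue inequality, and plug in (A5).
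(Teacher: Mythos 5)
Your proposal is correct and follows the same route as the paper: restrict to $U$, use the superadditivity $\lambda_{\min}(B+C)\ge\lambda_{\min}(B)+\lambda_{\min}(C)$ for symmetric operators, and plug in the coverage bound $\lambda_{\min}(\Delta F|_U)\ge\gamma/k$ from (A5). The extra Rayleigh-quotient remark and the invocation of Lemma~\ref{lemma:deltaF} for $\Delta F\succeq 0$ are harmless but not needed, since (A5) already gives the required lower bound on $U$ directly.
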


\begin{proof}
By (A5), $\Delta F|_U \succeq (\gamma/k)\,P_U$, so
$\lambda_{\min}(\Delta F|_U)\ge \gamma/k$.
Since $F_w'|_U = F_w|_U + \Delta F|_U$ and both are symmetric,
\begin{equation*}
\begin{aligned}
\lambda_{\min}(F_w'|_U)
&\ge \lambda_{\min}(F_w|_U) + \lambda_{\min}(\Delta F|_U) \\
&\ge \lambda_{\min}(F_w|_U) + \frac{\gamma}{k}.
\end{aligned}
\end{equation*}
\end{proof}

The same coverage condition (A5) also implies an average-curvature increase on $U$:
\begin{equation*}
\begin{aligned}
\frac{1}{k}\,\mathrm{tr}(P_U F_w')
&=
\frac{1}{k}\,\mathrm{tr}(P_U F_w)
+
\frac{1}{k}\,\mathrm{tr}(P_U \Delta F) \\
&\ge
\frac{1}{k}\,\mathrm{tr}(P_U F_w) + \frac{\gamma}{k}.
\end{aligned}
\end{equation*}





\subsubsection{Transferring Improvement from Fisher to Hessian}

\begin{lemma}[Fisher Hessian transfer on $U$]
\label{lemma:fisher_hessian}
Let $H(\theta)=\nabla^2_\theta R_w(\theta)$ and $H'(\theta)=\nabla^2_\theta R_{w'}(\theta)$
be the Hessians of the surrogate objectives associated with $F_w$ and $F_w'$,
respectively. Under (A4),
\[
\lambda_{\min}(H'|_U)
\ge
\lambda_{\min}(H|_U) + \frac{\gamma}{k} - 2\delta.
\]
\end{lemma}

\begin{proof}
For any symmetric matrices $A,B$, $|\lambda_{\min}(A)-\lambda_{\min}(B)|\le \|A-B\|_{\mathrm{op}}$.
Applying this to $(H,F_w)$ and $(H',F_w')$ and using (A4) yields
\begin{equation*}
\begin{aligned}
\lambda_{\min}(H'|_U)
&\ge \lambda_{\min}(F_w'|_U) - \delta \\
&\ge \lambda_{\min}(F_w|_U) + \frac{\gamma}{k} - \delta \\
&\ge \lambda_{\min}(H|_U) + \frac{\gamma}{k} - 2\delta .
\end{aligned}
\end{equation*}

where the middle inequality uses Lemma~\ref{lemma:rank_gain}.
\end{proof}
\subsubsection{Conditioning and Stability-Based Generalization}

\begin{lemma}[Improved conditioning reduces parameter sensitivity]
\label{lemma:stability}
Assume a restricted strong convexity condition on $U$: $\lambda_{\min}(H|_U)\ge \mu$,
and standard Lipschitz conditions for gradients hold.
Then the algorithmic stability scale is inversely proportional to $\mu$.
Consequently, increasing $\lambda_{\min}(H|_U)$ to $\mu'=\mu+\gamma/k-2\delta$
reduces sensitivity to data perturbations and yields a smaller stability-based
generalization bound; see \citet{bousquet2002stability,hardt2016train}.
\end{lemma}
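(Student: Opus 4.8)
The plan is to assemble the conclusion of Lemma~\ref{lemma:stability} directly from the Fisher-to-Hessian transfer already established in Lemma~\ref{lemma:fisher_hessian}, combined with a standard uniform-stability argument for strongly convex objectives. First I would observe that Lemma~\ref{lemma:fisher_hessian} gives $\lambda_{\min}(H'|_U) \ge \lambda_{\min}(H|_U) + \gamma/k - 2\delta$, so under the restricted strong convexity hypothesis $\lambda_{\min}(H|_U)\ge\mu$ we obtain the improved modulus $\mu' = \mu + \gamma/k - 2\delta$ on the subspace $U$. The substantive content is then to connect this curvature lower bound to an algorithmic stability scale, for which I would invoke the classical result that for an $L$-Lipschitz loss minimized over a $\mu$-strongly convex objective, the uniform stability constant of the empirical risk minimizer (or of SGD run to convergence) scales like $\Theta(L^2/(\mu n))$, and the associated generalization gap inherits the same $1/\mu$ dependence; this is exactly the content of \citet{bousquet2002stability} for ERM and \citet{hardt2016train} for SGD.

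The key steps, in order, would be: (i) restrict attention to the subspace $U$ and note that the analysis of \citet{bousquet2002stability,hardt2016train} applies verbatim with $\mu$ replaced by the restricted modulus, since the strong-convexity inequality is only used along directions in which parameters actually move under the reweighted update; (ii) quote the stability bound $\beta_{\mathrm{stab}} = O(L^2/(\mu n))$ and the implied generalization bound, so that the stability scale is manifestly $\propto 1/\mu$; (iii) substitute $\mu \mapsto \mu' = \mu + \gamma/k - 2\delta > \mu$ (valid whenever $\gamma/k > 2\delta$, i.e.\ when the coverage increment in (A5) dominates the Fisher--Hessian mismatch in (A4)) to conclude that the stability constant and hence the generalization bound strictly decrease. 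I would also flag explicitly that this is a qualitative, order-of-magnitude statement: the point is the monotone dependence on the curvature floor, not a tight constant.

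I expect the main obstacle to be conceptual rather than computational: making precise the sense in which "restricted" strong convexity on a $k$-dimensional subspace $U$ is enough to drive a stability bound, given that the ambient objective need not be strongly convex off $U$. The clean resolution is to note that the reweighting perturbation $\Delta F$, and hence the difference between the perturbed and unperturbed iterates, lives (to leading order) in the span covered by the upweighted gradients, which (A5) ties to $U$; so the non-expansiveness / contraction estimates underlying the stability proof only need curvature along $U$. I would state this as the operative assumption rather than re-derive the stability machinery, since Lemma~\ref{lemma:stability} is deliberately phrased as an appeal to \citet{bousquet2002stability,hardt2016train} rather than a self-contained bound. A secondary, minor caveat to record is the sign condition $\gamma/k - 2\delta > 0$: if (A4)'s Fisher--Hessian gap $\delta$ is too large relative to the injected curvature $\gamma/k$, the bound is vacuous, so the statement should be read as holding in the regime where the surrogate Hessian tracks the Fisher proxy tightly enough.
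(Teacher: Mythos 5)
Your proposal is consistent with the paper's treatment: the paper itself gives no proof for this lemma and simply appeals to the cited stability results of \citet{bousquet2002stability,hardt2016train}, and your sketch fills in exactly that appeal — read off the $1/\mu$ dependence of the uniform stability constant (and hence the generalization gap) for a strongly convex objective with Lipschitz gradients, chain it with Lemma~\ref{lemma:fisher_hessian} to get the improved modulus $\mu' = \mu + \gamma/k - 2\delta$, and conclude monotonicity. So in spirit you are taking the same route as the paper.

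Where you go beyond the paper, and usefully so, is in flagging two genuine subtleties that the paper leaves implicit. First, the classical bounds in \citet{bousquet2002stability,hardt2016train} are stated for globally strongly convex (or smooth, non-convex) objectives, not for objectives that are strongly convex only on a $k$-dimensional subspace $U$; your resolution — that the perturbation induced by reweighting lives, to leading order, in the span covered by the upweighted gradients, which Assumption (A5) ties to $U$, so the contraction estimates only need curvature along $U$ — is the right conceptual move and should really be stated as an additional standing assumption rather than inferred, since a rigorous version would need to control the component of the iterate difference orthogonal to $U$. Second, the sign condition $\gamma/k > 2\delta$ is required for the conclusion to be non-vacuous; the paper never states it, but Proposition~\ref{prop:main} silently presumes it. Neither of these is a defect in your argument — they are precisely the gaps a careful reader should notice in a lemma phrased as a citation rather than a self-contained proof.
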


\begin{proposition}[Conditioning and generalization improvement]
\label{prop:main}
Under (A1)--(A5), the reweighting rule induces a PSD Fisher augmentation and improves
curvature on the low-curvature subspace $U$. In particular:
\begin{enumerate}
  \item (Curvature on $U$) The Fisher proxy satisfies
        \begin{equation*}
        \begin{aligned}
        \frac{1}{k}\operatorname{tr}(P_U F_w' P_U)
        &\ge
        \frac{1}{k}\operatorname{tr}(P_U F_w P_U) + \frac{\gamma}{k}, \\
        \lambda_{\min}(F_w'|_U)
        &\ge
        \lambda_{\min}(F_w|_U) + \frac{\gamma}{k}.
        \end{aligned}
        \end{equation*}
  \item (Hessian transfer) The Hessian of the surrogate objective satisfies
        \[
        \lambda_{\min}(H'|_U)
        \ge
        \lambda_{\min}(H|_U) + \frac{\gamma}{k} - 2\delta.
        \]
  \item (Stability and generalization) If $\lambda_{\min}(H|_U)\ge\mu$, then after reweighting,
        the effective curvature lower bound increases to
        $\mu'=\mu+\gamma/k-2\delta$, which improves stability-based generalization bounds.
\end{enumerate}
\end{proposition}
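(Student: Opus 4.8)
The plan is to assemble Proposition~\ref{prop:main} directly from the chain of lemmas already established, treating it essentially as a bookkeeping corollary that packages Lemmas~\ref{lemma:delta}--\ref{lemma:stability} into a single statement. First I would recall the setup: we are in the single-step regime with fixed weights $\{w_i\}$, the weight increments $\delta w_i \ge 0$ on the set $T$ induce $\Delta F = \frac{1}{N}\sum_{i\in T}\delta w_i\, g_i g_i^\top$, and we write $F_w' = F_w + \Delta F$ with surrogate Hessians $H = \nabla^2_\theta R_w(\theta)$ and $H' = \nabla^2_\theta R_{w'}(\theta)$. The motivating step, Lemma~\ref{lemma:delta}, justifies that upweighting small-$\Delta_i$ samples is the same as upweighting samples whose gradients are weakly aligned with $G$; this is what makes Assumption (A5) a reasonable modeling hypothesis rather than an arbitrary one, so I would state that connection explicitly before invoking (A5).

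For part (1), I would invoke Lemma~\ref{lemma:deltaF} to get $\Delta F \succeq 0$ and hence $F_w' \succeq F_w$ with nondecreasing eigenvalues, then apply Assumption (A5), $P_U \Delta F P_U \succeq \frac{\gamma}{k} P_U$. The minimum-eigenvalue bound $\lambda_{\min}(F_w'|_U) \ge \lambda_{\min}(F_w|_U) + \gamma/k$ is exactly Lemma~\ref{lemma:rank_gain}. For the trace bound, I would note $\operatorname{tr}(P_U F_w' P_U) = \operatorname{tr}(P_U F_w P_U) + \operatorname{tr}(P_U \Delta F P_U)$ by linearity, and $\operatorname{tr}(P_U \Delta F P_U) \ge \operatorname{tr}(\frac{\gamma}{k} P_U) = \gamma$ since $P_U \Delta F P_U \succeq \frac{\gamma}{k} P_U$ and trace is monotone on PSD order; dividing by $k$ gives the claim. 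Part (2) is a verbatim restatement of Lemma~\ref{lemma:fisher_hessian}, obtained by chaining the operator-norm perturbation inequality for minimum eigenvalues with Assumption (A4) applied to both $(H, F_w)$ and $(H', F_w')$, losing $2\delta$ in total. Part (3) is Lemma~\ref{lemma:stability}: a restricted strong convexity bound $\lambda_{\min}(H|_U) \ge \mu$ upgrades, via part (2), to $\lambda_{\min}(H'|_U) \ge \mu + \gamma/k - 2\delta =: \mu'$, and the stability-based generalization bounds of \citet{bousquet2002stability,hardt2016train} scale inversely with this curvature lower bound.

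I would write the proof as: ``By Lemma~\ref{lemma:deltaF}, \ldots; combining Lemmas~\ref{lemma:rank_gain} and~\ref{lemma:fisher_hessian} with the trace computation above gives parts (1) and (2); part (3) follows from Lemma~\ref{lemma:stability} with $\mu' = \mu + \gamma/k - 2\delta$,'' keeping each step to one or two sentences. The main obstacle here is not any single inequality—each is elementary Weyl/trace-monotonicity algebra—but rather the honest framing of the assumptions: Assumption (A5) is doing essentially all the work (it directly postulates the coverage on $U$ that the conclusion asserts), so I would be careful in the prose to present the result as establishing \emph{consistency} of the reweighting mechanism with improved conditioning \emph{under} the modeling hypothesis that low-gain samples carry complementary curvature, rather than as an unconditional guarantee. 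The one genuinely non-vacuous link is Lemma~\ref{lemma:delta}'s alignment interpretation, which is what connects the algorithmically observable quantity $\Delta_i^{(t)}$ to the geometric premise of (A5); I would make sure the proof text flags that this is the place where the gain-based rule earns its interpretation.
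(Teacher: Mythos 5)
Your proposal is correct and matches the paper's approach: the paper's proof is exactly this bookkeeping assembly, with item~1 cited to Lemma~\ref{lemma:deltaF}, assumption~(A5), and Lemma~\ref{lemma:rank_gain}, item~2 to Lemma~\ref{lemma:fisher_hessian}, and item~3 to Lemma~\ref{lemma:stability}. The trace computation you spell out is also present in the paper (in the text immediately following Lemma~\ref{lemma:rank_gain}), and your caveat about Assumption~(A5) bearing the main burden agrees with the paper's own framing of the result as conditional on idealized assumptions.
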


\begin{proof}
Item 1 follows from Lemma~\ref{lemma:deltaF}, assumption (A5), and Lemma~\ref{lemma:rank_gain}.
Item 2 follows from Lemma~\ref{lemma:fisher_hessian}. Item 3 follows from Lemma~\ref{lemma:stability}.
\end{proof}

\noindent
In summary, gain-based reweighting uses small gain as a signal of weak alignment with recent updates, upweights such samples, injects curvature into undercovered directions through PSD Fisher augmentation, and improves local conditioning on low-curvature subspaces, which supports stronger stability-based generalization guarantees.
\begin{figure}[ht]
    \centering
    \begin{subfigure}{0.455\textwidth}
        \centering
        \includegraphics[width=\linewidth]{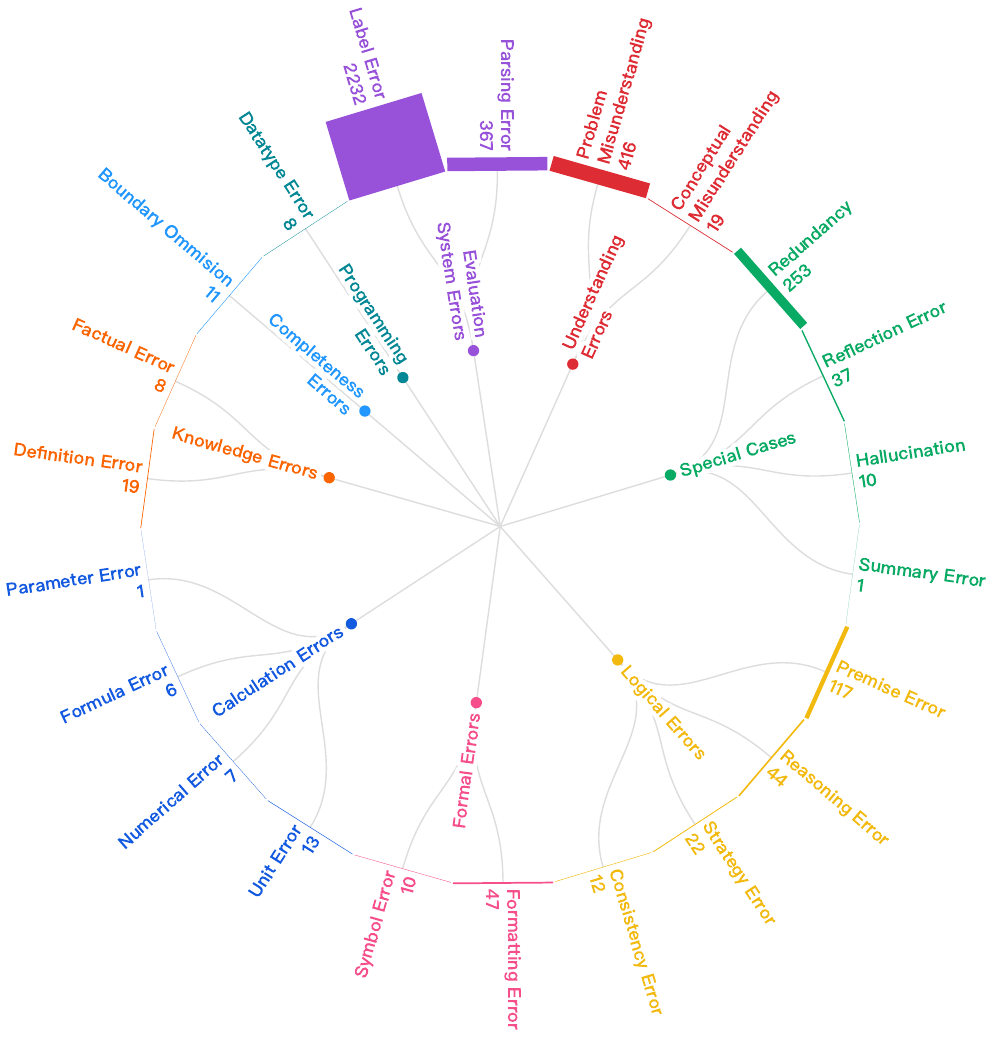}
        \caption{Error distribution in OpenMathReasoning.}
        \label{fig:openmath_errors}
    \end{subfigure}
    \begin{subfigure}{0.495\textwidth}
        \centering
        \includegraphics[width=\linewidth]{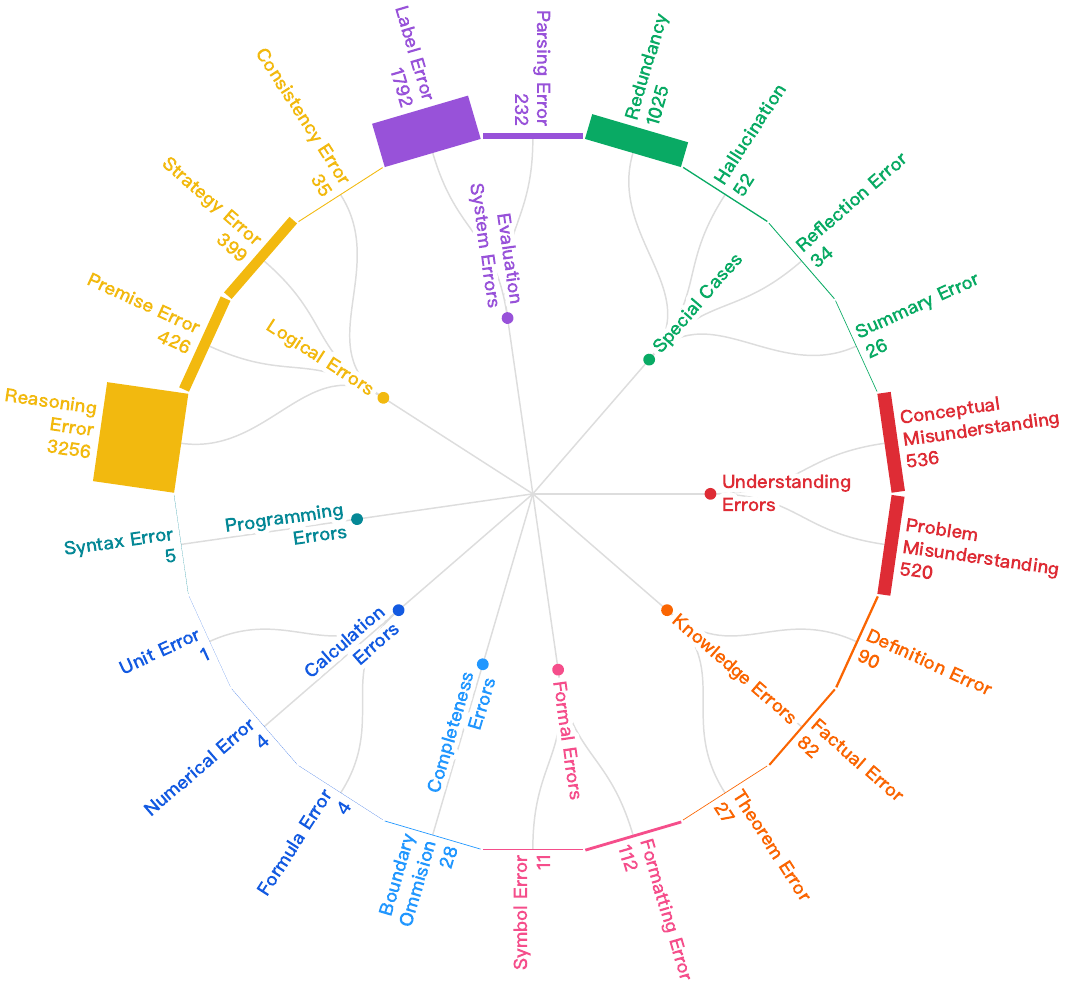}
        \caption{Error distribution in MMLU.}
        \label{fig:mmlu_errors}
    \end{subfigure}
    \caption{Detailed categorization of \textit{negative} samples in OpenMathReasoning and MMLU.}
    \label{fig:error_distribution}
\end{figure}

\begin{figure}[htbp!]
    \centering
        \small
        \includegraphics[width=\linewidth]{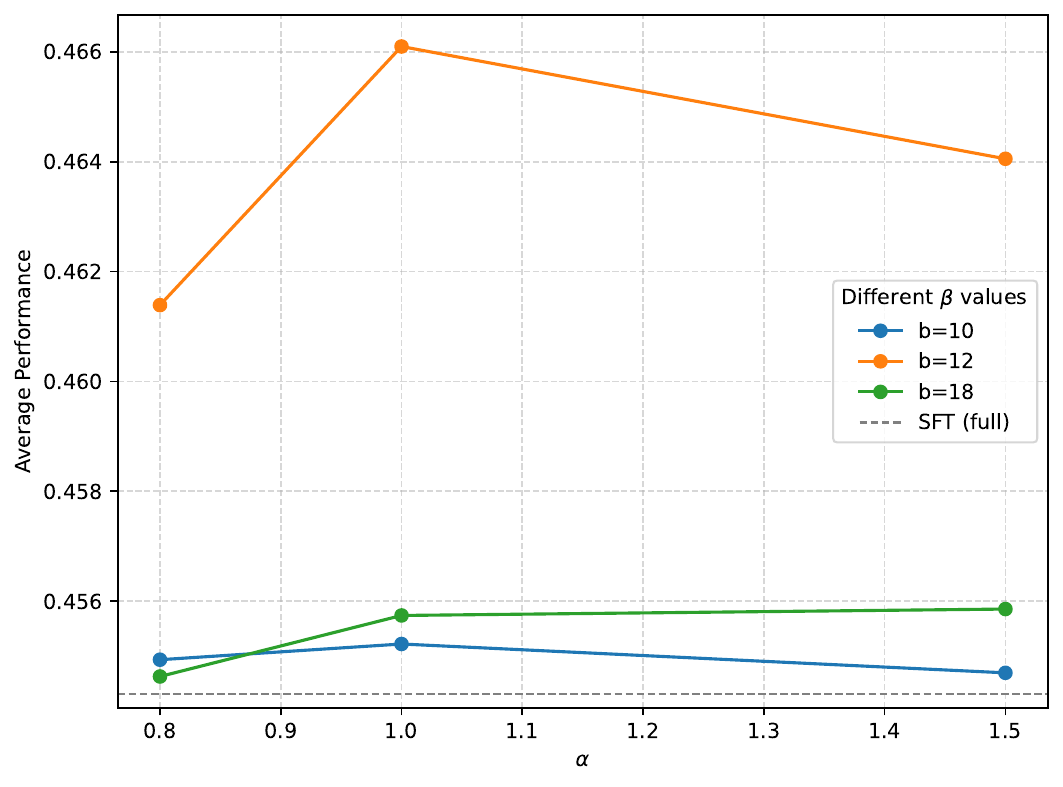}
        \captionof{figure}{Ablation study on the hyperparameters $\alpha$ and $\beta$. GLOW exhibits stable performance across different settings, demonstrating the robustness of the reweighting formulation.} 
        \label{fig:ablation}
\end{figure}

\subsection{IRM View of Diverse Negative Trajectories}
\label{app:irm_view}

This section formalizes our interpretation through Invariant Risk Minimization (IRM)~\citep{arjovsky2019invariant} in an autoregressive language modeling setting. Let $\mathcal{E}$ denote the set of environments induced by error categories of \textit{negative} trajectories. Each environment $e\in\mathcal{E}$ corresponds to a distribution $D^{e}$ over sequences $(x,y)$, where $x$ is the input and $y$ is the target reasoning trajectory.

We decompose the language model into a shared representation map $\Phi$ and a shared next-token predictor $w$, where $\Phi$ denotes the model body and $w$ denotes the vocabulary projection head. IRM seeks a representation $\Phi$ and a predictor $w$ such that the same $w$ is optimal across all environments when paired with $\Phi$:
\begin{equation}
\label{eq:irm_llm_obj}
\begin{cases}
\displaystyle \min_{\Phi,\,w}\ \sum_{e\in\mathcal{E}} R^{e}(w\circ \Phi), \\
\text{s.t. }\ 
w \in \arg\min_{w'} R^{e}(w'\circ \Phi),\ \forall e\in\mathcal{E}.
\end{cases}
\end{equation}

The per-environment autoregressive risk is
\begin{equation}
\label{eq:irm_llm_risk}
\begin{aligned}
R^{e}(w\circ \Phi)
&=\mathbb{E}_{(x,y)\sim D^{e}}
\Bigg[
\sum_{t=1}^{|y|}
\ell\!\left(
w\!\left(\Phi(x, y_{<t})\right),\, y_t
\right)
\Bigg],
\end{aligned}
\end{equation}
where $\ell$ denotes the cross-entropy loss.

Because $w$ is shared across all $e\in\mathcal{E}$, the shared-optimality constraint encourages $\Phi$ to encode reasoning features that remain predictive across heterogeneous error environments. Under our interpretation, \textit{negative} trajectories enlarge $\mathcal{E}$ by covering many error categories, which explains why diversity in \textit{negatives}, rather than any single error type, can improve robustness and OOD generalization.

\subsection{Detailed Taxonomy of Negative Training Samples}
\label{app:cat}
We provide statistics on the detailed categorization of \textit{negative} samples in our training dataset. As shown in Figure~\ref{fig:openmath_errors} and Figure~\ref{fig:mmlu_errors}, the error types of samples from OpenMathReasoning and MMLU that are not selected by reject sampling can be grouped into nine major categories and twenty-four subcategories. Although the distribution across categories is imbalanced, the errors still exhibit a broad coverage, ensuring a comprehensive representation of error types.

\subsection{Hyperparameter Sensitivity of~\ours{}}
\label{app:hyper}
As shown in Figure~\ref{fig:ablation}, ~\ours{} yields modest improvements over the full-SFT reference in most configurations.  Varying $\alpha$ between 0.8 and 1.5 leads to small changes, and $\beta=12$ is generally stronger than $\beta=10$ or $\beta=18$ at matched $\alpha$. These results suggest incremental gains with moderate hyperparameter choices in our setup.
\subsection{Training Loss on OpenMathReasoning and MMLU}
\label{app:loss}
Figure~\ref{fig:all} compares training losses for all models on OpenMathReasoning and MMLU under \textit{positive} and \textit{negative} settings.
\subsection{Model Performance Evolution Across Epochs}
\label{app:progress}
\begin{table*}[t]
\centering

\scriptsize
\setlength{\tabcolsep}{3.2pt}
\renewcommand{\arraystretch}{1.12}

\begin{subtable}[t]{0.485\textwidth}
\centering
\caption{Qwen2.5-7B is fine-tuned on the \textbf{math reasoning} dataset using \textbf{\textit{positive}} distilled trajectories.}
\resizebox{\linewidth}{!}{%
\begin{tabular}{lccccccc}
\toprule
Epoch & Math500 & Minerva & Olympia & AMC & MMLU & MMLU-Pro & BBH \\
\midrule
Base  & 58.40 & 26.84 & 26.07 & 52.50 & 55.80 & 26.56 & 51.10 \\
5epoch   & 72.80 & 37.13 & 37.19 & 45.00 & 60.95 & 30.34 & 54.69 \\
10epoch  & 75.80 & 38.24 & 40.59 & 65.00 & 64.06 & 32.50 & 61.62 \\
15epoch  & 77.20 & 36.76 & 41.93 & 55.00 & 60.81 & 32.15 & 59.69 \\
20epoch  & 78.00 & 36.76 & 41.78 & 57.50 & 61.03 & 32.70 & 60.58 \\
\bottomrule
\end{tabular}}
\end{subtable}\hfill
\begin{subtable}[t]{0.485\textwidth}
\centering
\caption{Qwen2.5-7B is fine-tuned on the \textbf{math reasoning} dataset using \textbf{\textit{negative}} distilled trajectories.}
\resizebox{\linewidth}{!}{%
\begin{tabular}{lccccccc}
\toprule
Epoch & Math500 & Minerva & Olympia & AMC & MMLU & MMLU-Pro & BBH \\
\midrule
Base  & 58.40 & 26.84 & 26.07 & 52.50 & 55.80 & 26.56 & 51.10 \\
5epoch   & 71.20 & 31.99 & 31.56 & 47.50 & 62.58 & 44.04 & 56.28 \\
10epoch  & 77.20 & 34.93 & 39.26 & 50.00 & 71.39 & 52.14 & 69.49 \\
15epoch  & 78.60 & 39.71 & 38.37 & 52.50 & 72.10 & 52.24 & 71.09 \\
20epoch  & 77.60 & 40.44 & 38.37 & 57.50 & 73.11 & 53.74 & 71.73 \\
\bottomrule
\end{tabular}}
\end{subtable}

\vspace{0.9em}

\begin{subtable}[t]{0.485\textwidth}
\centering
\caption{Qwen2.5-7B is fine-tuned on the \textbf{general reasoning} dataset using \textbf{\textit{positive}} distilled trajectories.}
\resizebox{\linewidth}{!}{%
\begin{tabular}{lccccccc}
\toprule
Epoch & Math500 & Minerva & Olympia & AMC & MMLU & MMLU-Pro & BBH \\
\midrule
Base  & 58.40 & 26.84 & 26.07 & 52.50 & 55.80 & 26.56 & 51.10 \\
5epoch   & 72.00 & 36.76 & 37.33 & 47.50 & 73.62 & 50.61 & 64.05 \\
10epoch  & 74.60 & 37.50 & 41.48 & 55.00 & 73.79 & 53.32 & 69.73 \\
15epoch  & 72.00 & 37.50 & 39.26 & 50.00 & 74.11 & 53.91 & 68.34 \\
20epoch  & 74.40 & 37.50 & 39.85 & 50.00 & 73.42 & 53.22 & 68.23 \\
\bottomrule
\end{tabular}}
\end{subtable}\hfill
\begin{subtable}[t]{0.485\textwidth}
\centering
\caption{Qwen2.5-7B is fine-tuned on the \textbf{general reasoning} dataset using \textbf{\textit{negative}} distilled trajectories.}
\resizebox{\linewidth}{!}{%
\begin{tabular}{lccccccc}
\toprule
Epoch & Math500 & Minerva & Olympia & AMC & MMLU & MMLU-Pro & BBH \\
\midrule
Base  & 58.40 & 26.84 & 26.07 & 52.50 & 55.80 & 26.56 & 51.10 \\
5epoch   & 76.80 & 36.76 & 37.78 & 47.50 & 71.09 & 43.99 & 66.00 \\
10epoch  & 76.80 & 37.87 & 40.30 & 52.50 & 71.43 & 45.87 & 68.84 \\
15epoch  & 76.80 & 37.13 & 41.48 & 55.00 & 71.30 & 44.62 & 69.30 \\
20epoch  & 77.00 & 37.13 & 42.07 & 60.00 & 71.23 & 45.79 & 69.46 \\
\bottomrule
\end{tabular}}
\end{subtable}

\vspace{0.9em}

\begin{subtable}[t]{0.485\textwidth}
\centering
\caption{Qwen2.5-32B is fine-tuned on the \textbf{math reasoning} dataset using \textbf{\textit{positive}} distilled trajectories.}
\resizebox{\linewidth}{!}{%
\begin{tabular}{lccccccc}
\toprule
Epoch & Math500 & Minerva & Olympia & AMC & MMLU & MMLU-Pro & BBH \\
\midrule
Base  & 63.20 & 34.19 & 26.52 & 35.00 & 68.34 & 39.80 & 58.65 \\
5epoch   & 90.20 & 49.63 & 59.11 & 85.00 & 76.53 & 46.77 & 78.04 \\
10epoch  & 92.60 & 50.00 & 60.44 & 85.00 & 78.63 & 51.67 & 79.01 \\
15epoch  & 93.00 & 48.53 & 62.07 & 90.00 & 78.72 & 51.99 & 80.57 \\
20epoch  & 91.40 & 50.74 & 60.89 & 85.00 & 79.01 & 54.31 & 80.61 \\
\bottomrule
\end{tabular}}
\end{subtable}\hfill
\begin{subtable}[t]{0.485\textwidth}
\centering
\caption{Qwen2.5-32B is fine-tuned on the \textbf{math reasoning} dataset using \textbf{\textit{negative}} distilled trajectories.}
\resizebox{\linewidth}{!}{%
\begin{tabular}{lccccccc}
\toprule
Epoch & Math500 & Minerva & Olympia & AMC & MMLU & MMLU-Pro & BBH \\
\midrule
Base  & 63.20 & 34.19 & 26.52 & 35.00 & 68.34 & 39.80 & 58.65 \\
5epoch   & 88.40 & 45.22 & 52.30 & 85.00 & 83.07 & 68.23 & 83.55 \\
10epoch  & 92.20 & 51.10 & 57.93 & 85.00 & 85.14 & 73.75 & 84.22 \\
15epoch  & 91.20 & 50.74 & 57.33 & 90.00 & 85.02 & 73.48 & 84.62 \\
20epoch  & 92.20 & 50.74 & 58.37 & 95.00 & 85.47 & 73.53 & 84.51 \\
\bottomrule
\end{tabular}}
\end{subtable}

\vspace{0.9em}

\begin{subtable}[t]{0.485\textwidth}
\centering
\caption{Qwen2.5-32B is fine-tuned on the \textbf{general reasoning} dataset using \textbf{\textit{positive}} distilled trajectories.}
\resizebox{\linewidth}{!}{%
\begin{tabular}{lccccccc}
\toprule
Epoch & Math500 & Minerva & Olympia & AMC & MMLU & MMLU-Pro & BBH \\
\midrule
Base  & 63.20 & 34.19 & 26.52 & 35.00 & 68.34 & 39.80 & 58.65 \\
5epoch   & 84.60 & 44.85 & 52.00 & 62.50 & 82.10 & 66.54 & 80.03 \\
10epoch  & 86.60 & 46.69 & 55.70 & 75.00 & 81.14 & 67.01 & 80.69 \\
15epoch  & 85.00 & 47.06 & 56.59 & 75.00 & 81.73 & 68.33 & 81.73 \\
20epoch  & 85.20 & 46.69 & 56.15 & 75.00 & 81.97 & 68.54 & 81.60 \\
\bottomrule
\end{tabular}}
\end{subtable}\hfill
\begin{subtable}[t]{0.485\textwidth}
\centering
\caption{Qwen2.5-32B is fine-tuned on the \textbf{math reasoning} dataset using \textbf{\textit{negative}} distilled trajectories.}
\resizebox{\linewidth}{!}{%
\begin{tabular}{lccccccc}
\toprule
Epoch & Math500 & Minerva & Olympia & AMC & MMLU & MMLU-Pro & BBH \\
\midrule
Base  & 63.20 & 34.19 & 26.52 & 35.00 & 68.34 & 39.80 & 58.65 \\
5epoch   & 85.00 & 44.49 & 51.26 & 77.50 & 78.74 & 57.48 & 79.09 \\
10epoch  & 87.20 & 46.30 & 54.52 & 75.00 & 79.01 & 60.43 & 80.88 \\
15epoch  & 86.40 & 47.79 & 55.70 & 65.00 & 77.77 & 57.14 & 79.97 \\
20epoch  & 86.40 & 47.06 & 56.89 & 72.50 & 77.99 & 58.34 & 80.71 \\
\bottomrule
\end{tabular}}
\end{subtable}
\caption{\textbf{Checkpoint evaluation across SFT epochs with distilled reasoning trajectories.}
We report performance at 5, 10, 15, and 20 epochs. Each row corresponds to a model size and training dataset, and each row contains two subtables that compare training on \textit{\textit{positive}} (left) versus \textit{negative} (right) distilled trajectories. Columns in each subtable correspond to benchmarks. Rows correspond to training epochs, with `Base' denoting the model before SFT.
}
\label{tab:epoch_sweep_pos_neg}
\end{table*}

Table~\ref{tab:epoch_sweep_pos_neg} compares intermediate checkpoints (epochs 5--20) for Qwen2.5-7B and 32B. Across settings, \textit{negative-trajectory} SFT consistently outperforms the base model, yielding gains comparable to its \textit{positive} counterpart while often matching or exceeding it on OOD benchmarks. This confirms that \textit{negatives} provide structured supervision rather than noise.


\begin{figure*}[htbp]
    \centering
    \scalebox{0.75}{ 
    \begin{minipage}{\textwidth}
        \begin{subfigure}{0.48\textwidth}
            \centering
            \includegraphics[width=\linewidth]{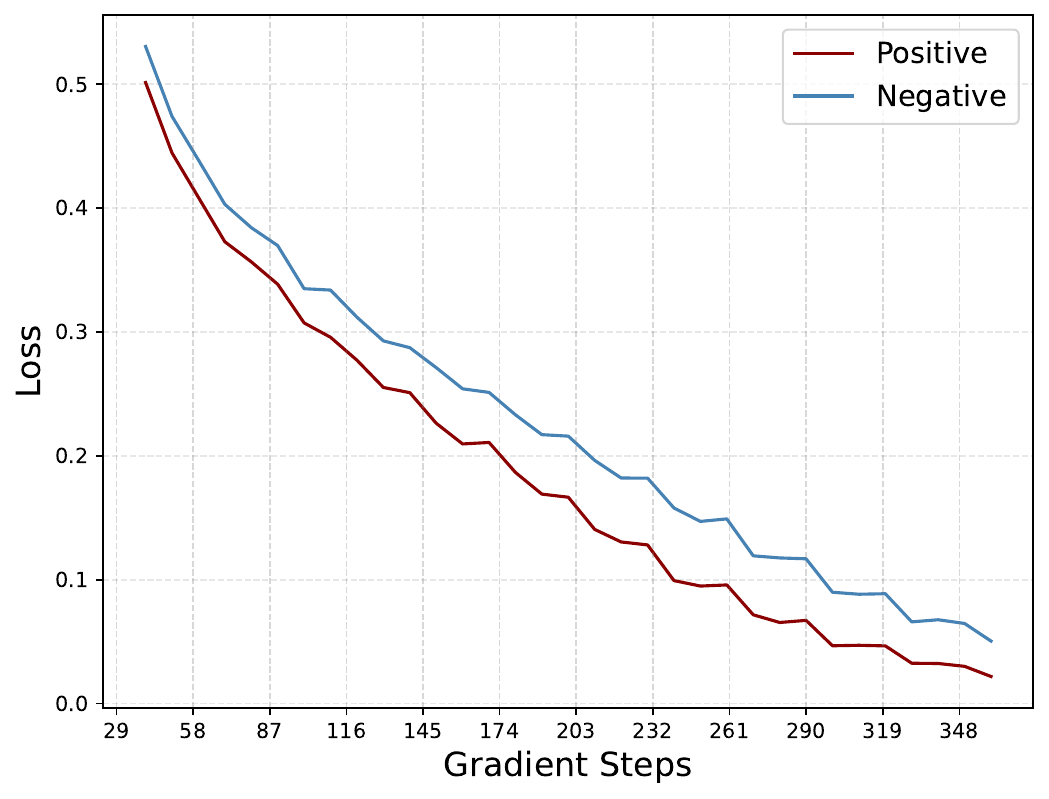}
            \caption{Qwen2.5-3B on OpenMathReasoning}
            \label{fig:1}
        \end{subfigure}
        \begin{subfigure}{0.48\textwidth}
            \centering
            \includegraphics[width=\linewidth]{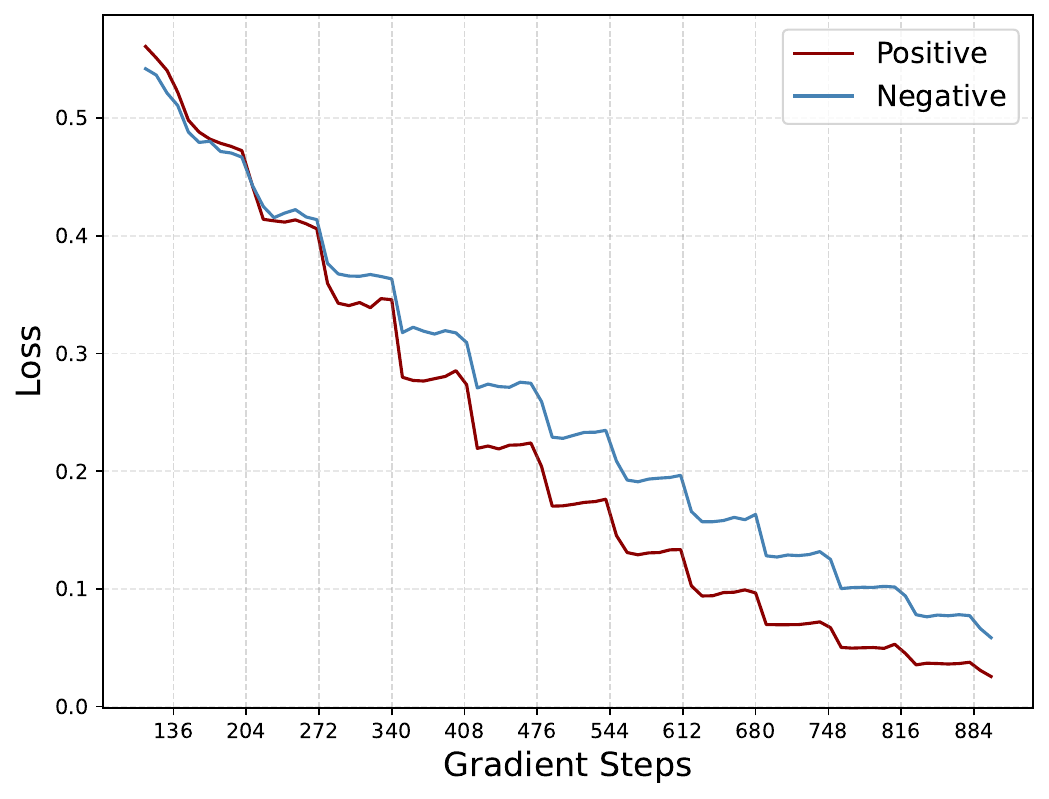}
            \caption{Qwen2.5-3B on MMLU}
            \label{fig:2}
        \end{subfigure}
        
        \begin{subfigure}{0.48\textwidth}
            \centering
            \includegraphics[width=\linewidth]{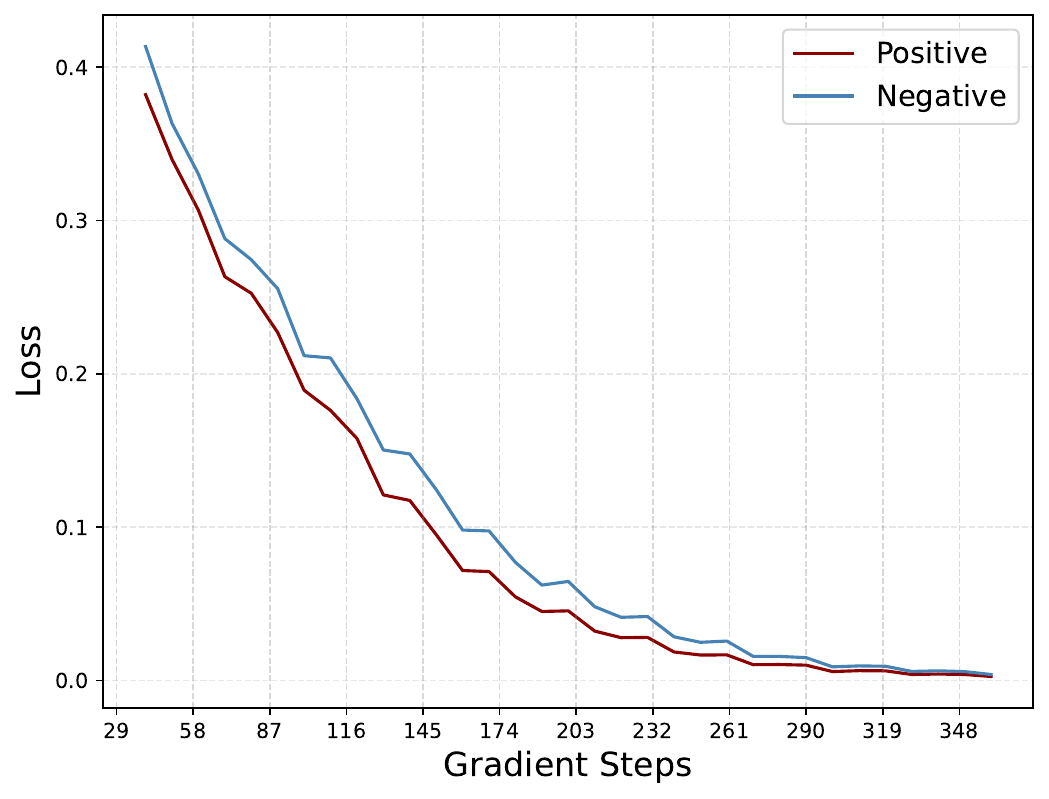}
            \caption{Qwen2.5-7B on OpenMathReasoning}
            \label{fig:3}
        \end{subfigure}
        \begin{subfigure}{0.48\textwidth}
            \centering
            \includegraphics[width=\linewidth]{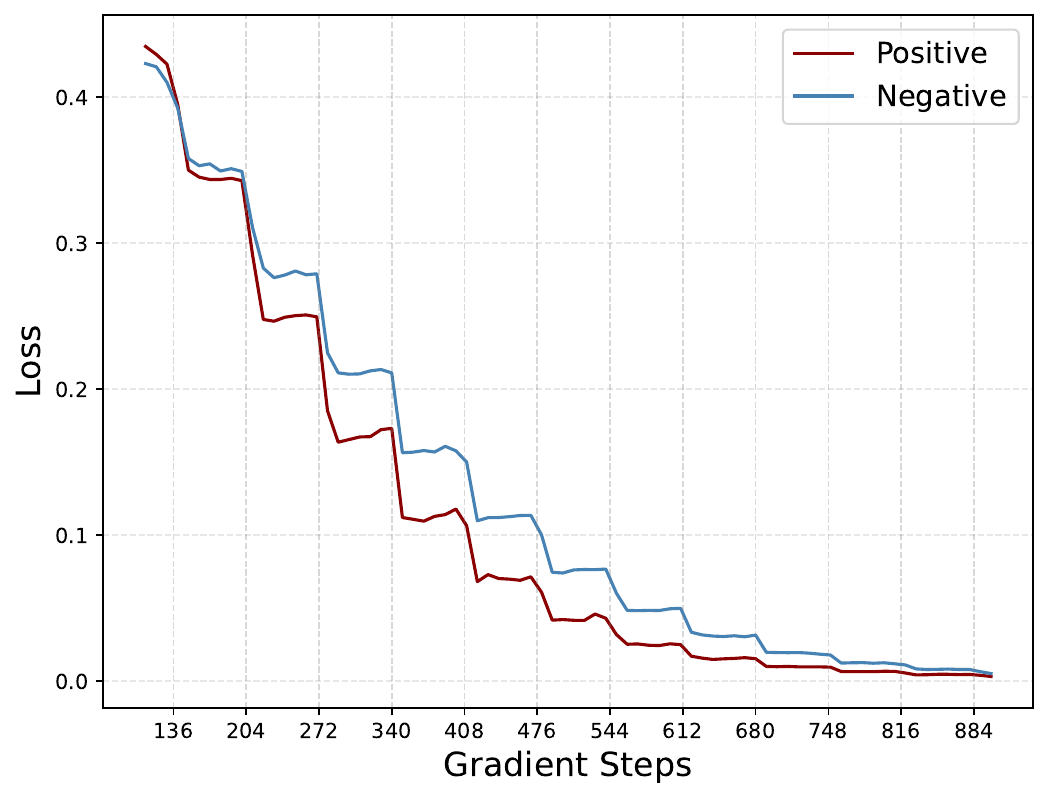}
            \caption{Qwen2.5-7B on MMLU}
            \label{fig:4}
        \end{subfigure}
        
        \begin{subfigure}{0.48\textwidth}
            \centering
            \includegraphics[width=\linewidth]{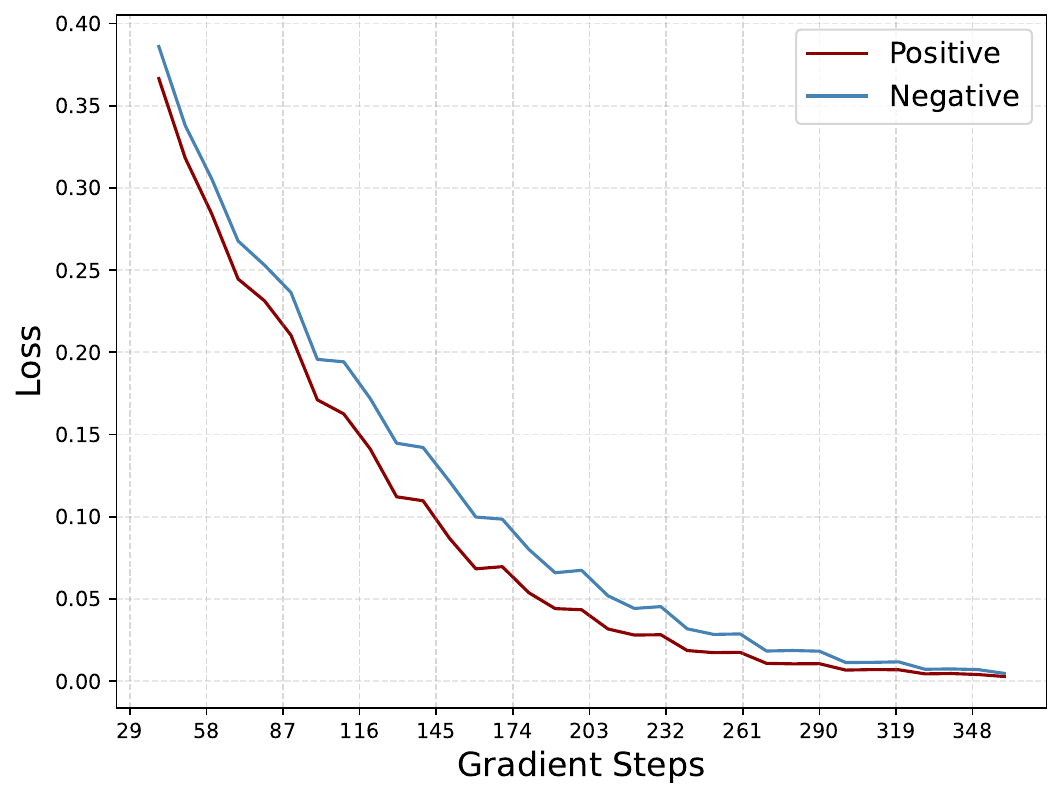}
            \caption{Qwen2.5-14B on OpenMathReasoning}
            \label{fig:5}
        \end{subfigure}
        \begin{subfigure}{0.48\textwidth}
            \centering
            \includegraphics[width=\linewidth]{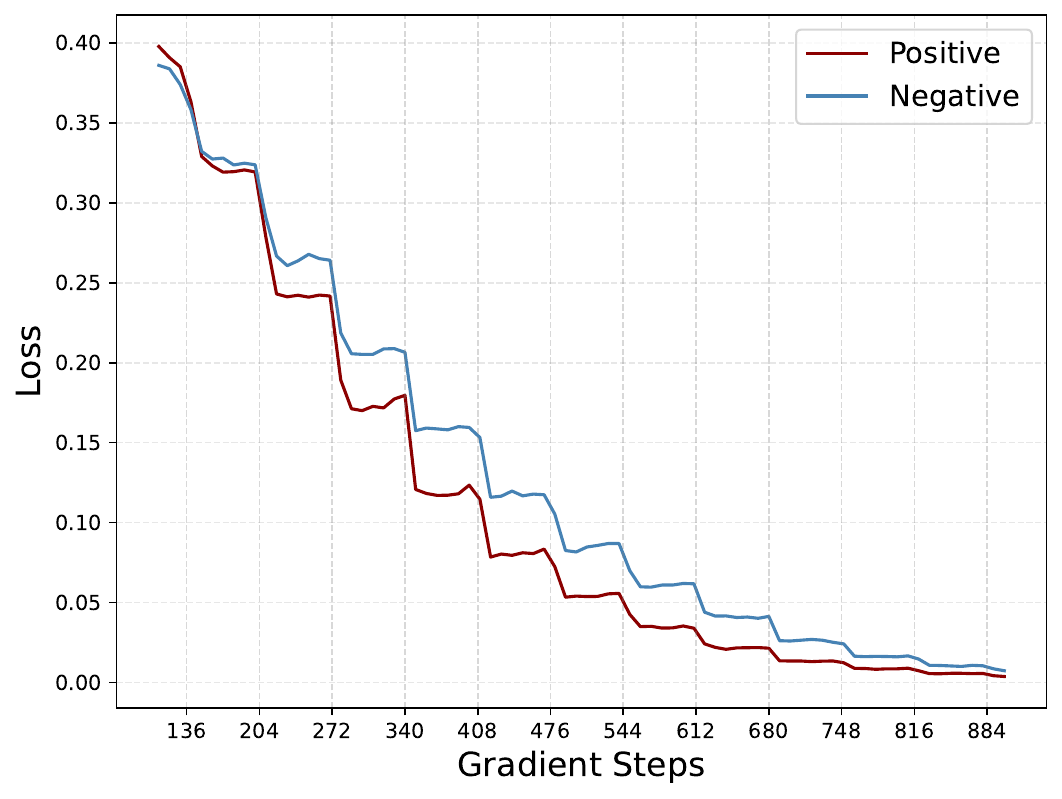}
            \caption{Qwen2.5-14B on MMLU}
            \label{fig:6}
        \end{subfigure}
        
        \begin{subfigure}{0.48\textwidth}
            \centering
            \includegraphics[width=\linewidth]{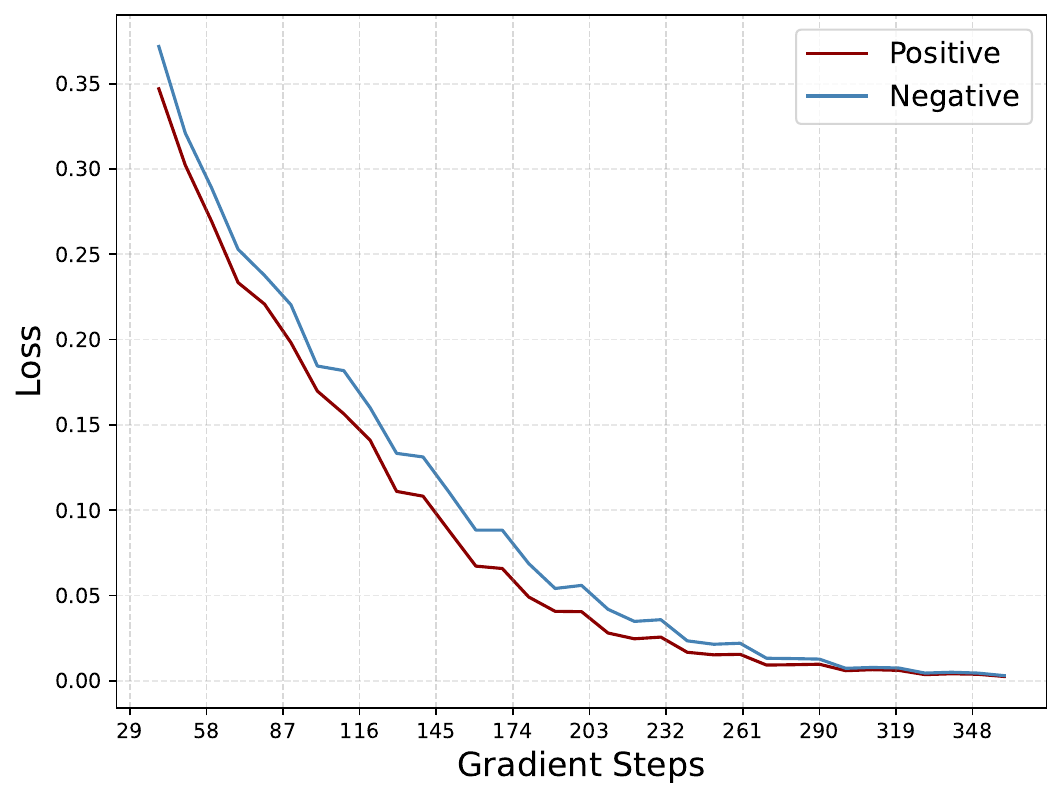}
            \caption{Qwen2.5-32B on OpenMathReasoning}
            \label{fig:7}
        \end{subfigure}
        \begin{subfigure}{0.48\textwidth}
            \centering
            \includegraphics[width=\linewidth]{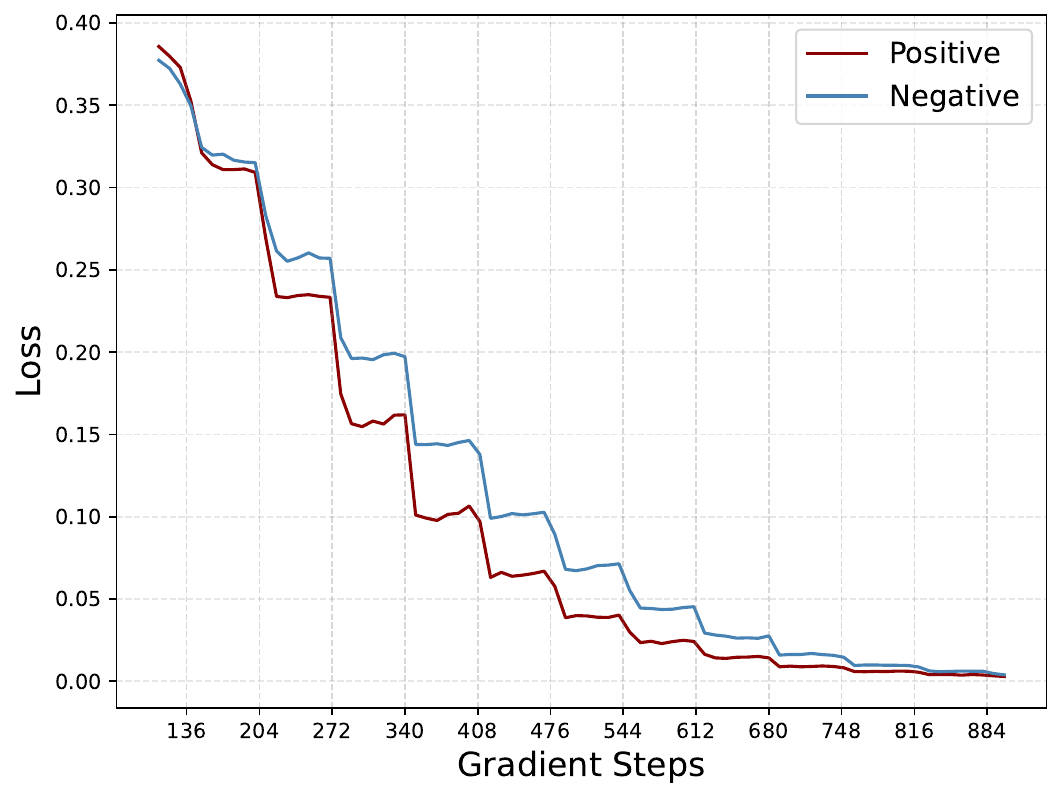}
            \caption{Qwen2.5-32B on MMLU}
            \label{fig:8}
        \end{subfigure}
        
        \begin{subfigure}{0.48\textwidth}
            \centering
            \includegraphics[width=\linewidth]{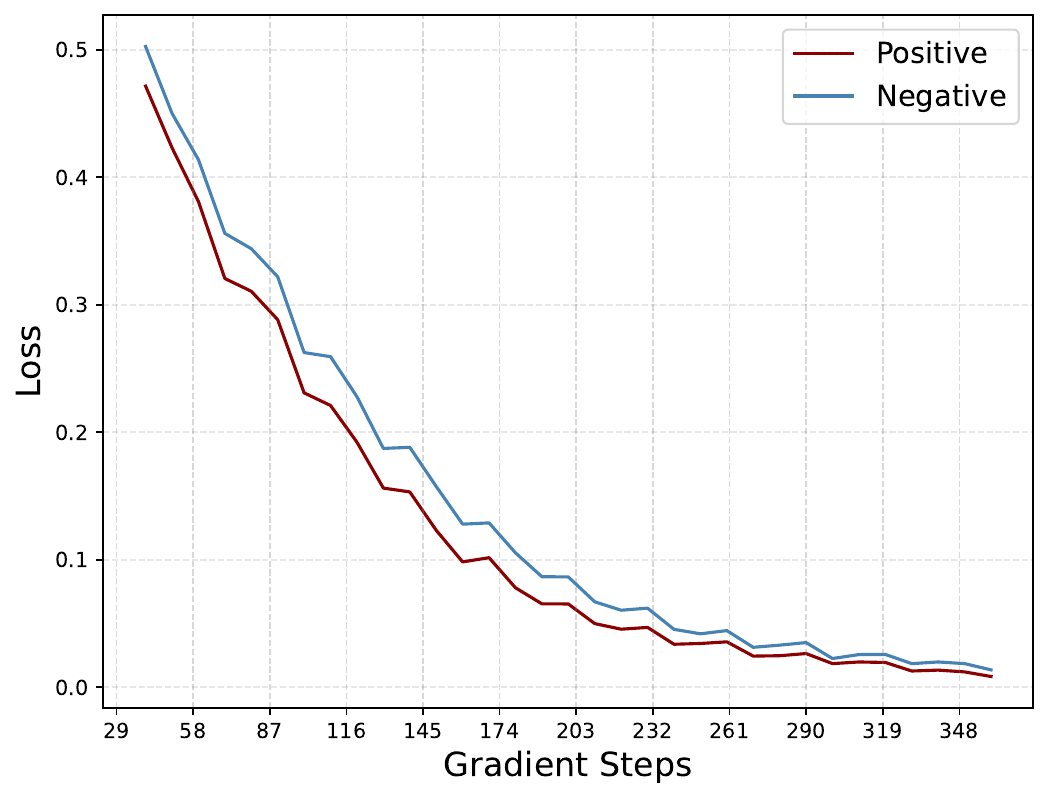}
            \caption{ Llama3.1-8B on OpenMathReasoning}
            \label{fig:9}
        \end{subfigure}
        \begin{subfigure}{0.48\textwidth}
            \centering
            \includegraphics[width=\linewidth]{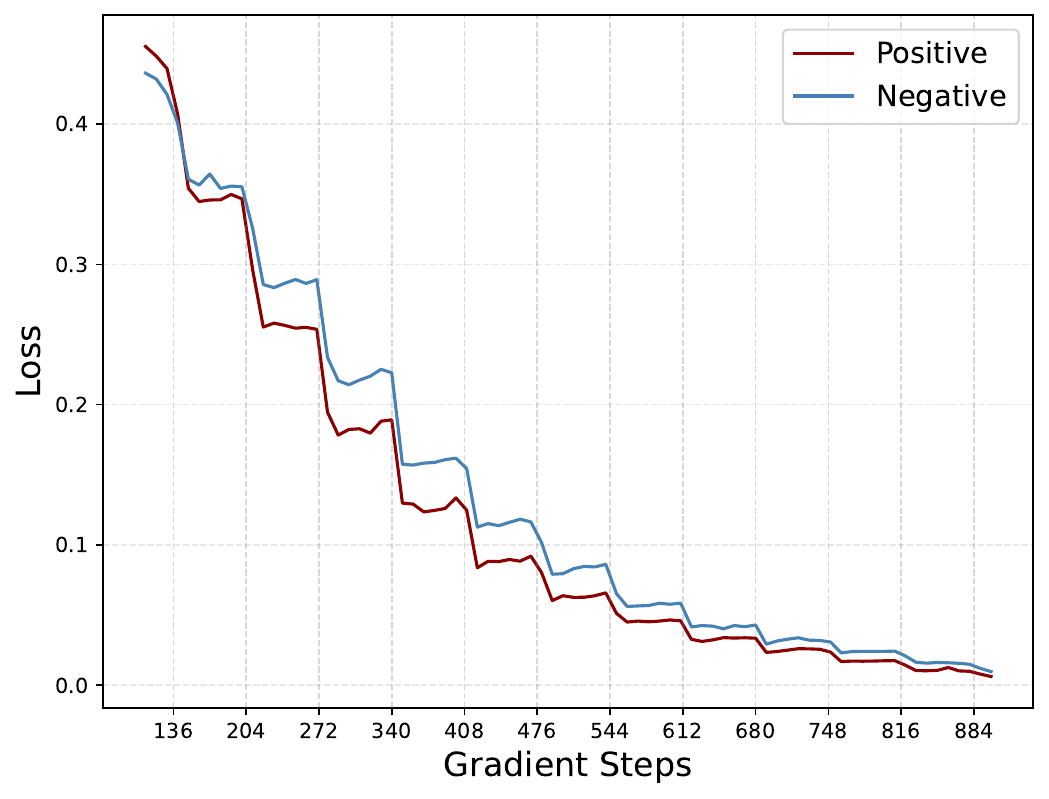}
            \caption{Llama3.1-8B on MMLU}
            \label{fig:10}
        \end{subfigure}
    \end{minipage}}
    
    \caption{Training loss of Qwen2.5 models and Llama3.1-8B on OpenMathReasoning (left) and MMLU (right). Losses drop across epochs, with the \textit{positive} setting converging faster than the \textit{negative}.}
    \label{fig:all}
\end{figure*}


\subsection{Negatives Are Frequently Upweighted by \ours{}}
\label{app:neg_ratio}

Figure~\ref{fig:neg_ratio_both} reports the fraction of \textit{negatives} among the most upweighted examples during \ours{} training. We fine-tune Qwen2.5-3B on Math and MMLU using a shuffled mixture of \textit{positives} and \textit{negatives}, where responses are distilled from Qwen3-8B and labels are determined by final-answer matching. At each optimization step, we select the example with the largest upweighting signal and compute, within each epoch, the proportion of \textit{negatives} among these selections. The fraction stays above 50\% for most epochs, peaks around 75\%--80\% early in training, and then gradually approaches 50\%. This aligns with the design of \ours{}, which emphasizes samples with small inter-epoch loss reduction, a behavior more common among \textit{negatives}.

\begin{figure}[htbp]
    \centering
    \includegraphics[width=\columnwidth]{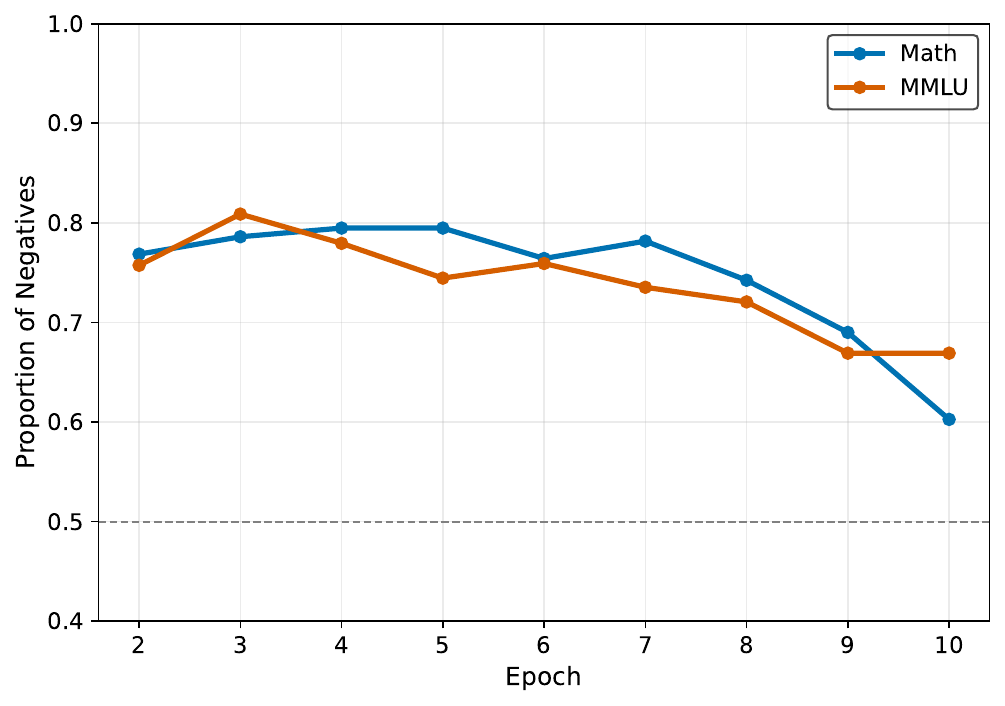}
    \caption{Fraction of \textit{negatives} among stepwise highest-weight samples across epochs for Math and MMLU training.}
    \label{fig:neg_ratio_both}
\end{figure}


\subsection{Pass@k under OOD Evaluation}
\label{app:passk}

We evaluate pass@k ($k\in\{4,8,16,32\}$) averaged over three OOD benchmarks per setting (OpenMath: {BBH, ACPBench, HeadQA}; MMLU: {Olympia, ACPBench, HeadQA}). As shown in Figures~\ref{fig:passk_openmath} and~\ref{fig:passk_mmlu}, \textit{negative}-trained models consistently achieve higher pass@k across all $k$. This superior multi-sample efficiency confirms that \textit{negatives} promote broader reasoning exploration and provide a stronger base policy for subsequent RL.
\begin{figure}[h]
  \centering
  \includegraphics[width=\linewidth]{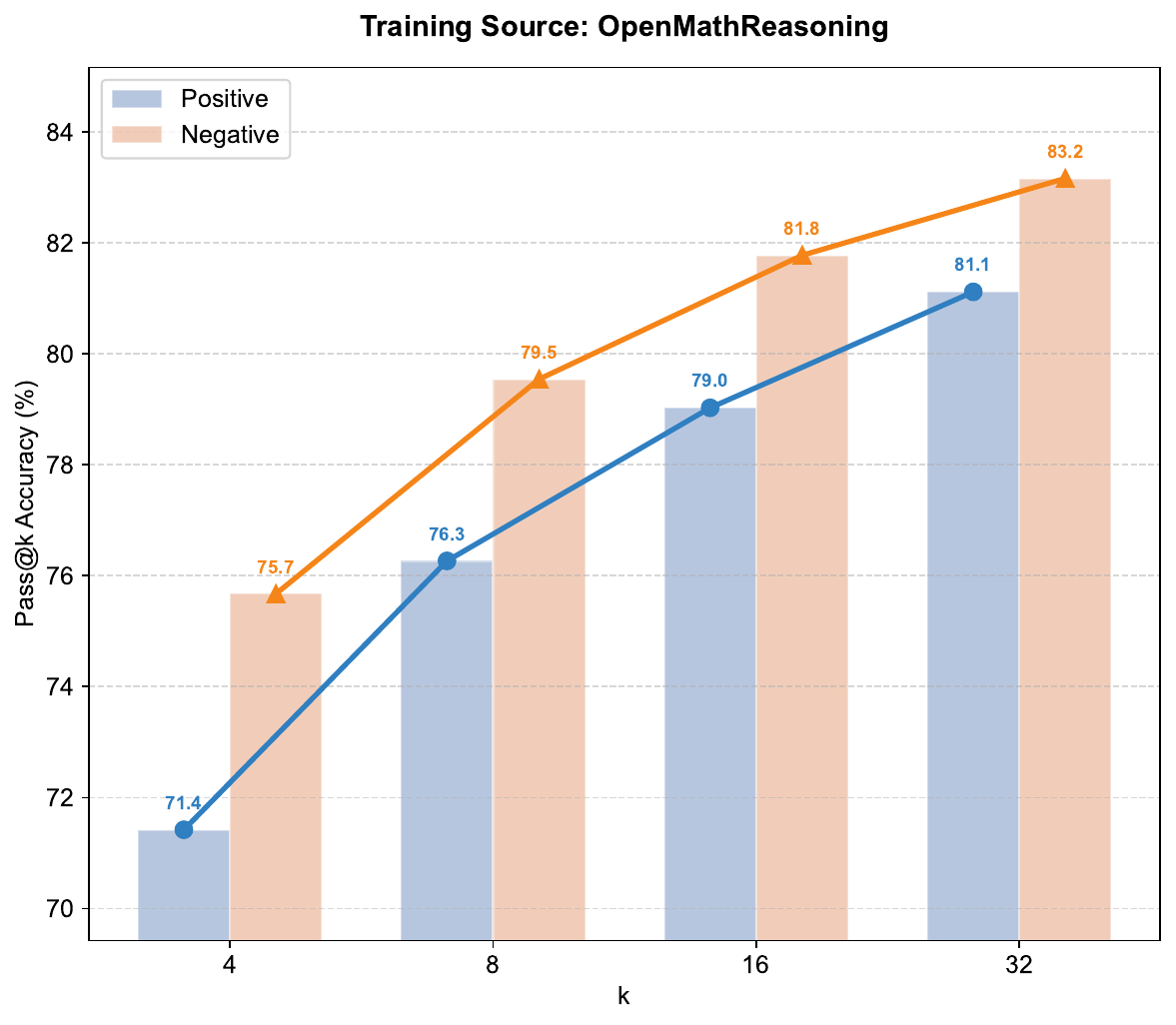}
  \caption{OOD pass@k for models trained on OpenMathReasoning under \textit{positive-only} vs.\ \textit{negative-only} SFT. Results are averaged over BBH, ACPBench, and HeadQA.}
  \label{fig:passk_openmath}
\end{figure}

\begin{figure}[h]
  \centering
  \includegraphics[width=\linewidth]{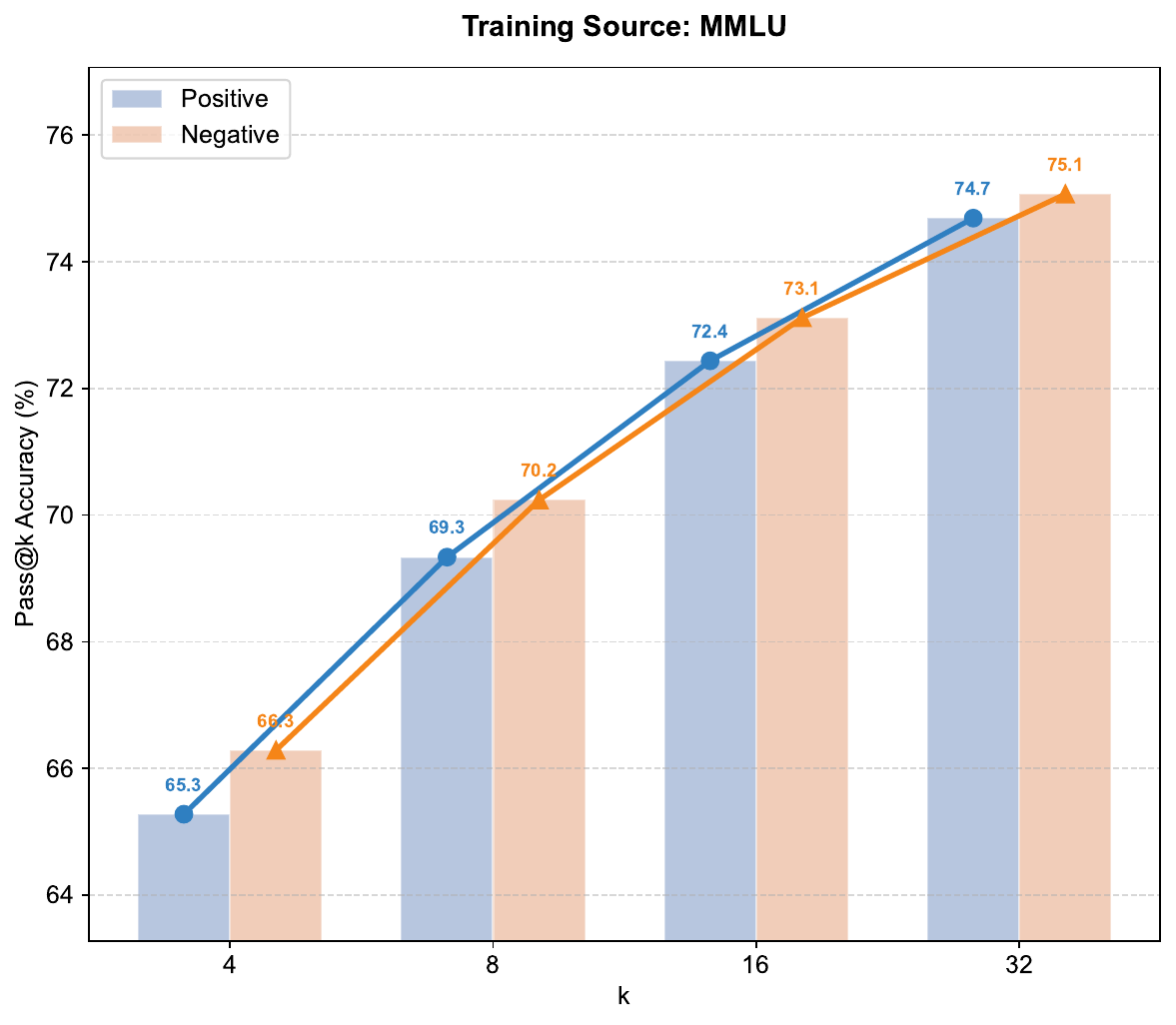}
  \caption{OOD pass@k for models trained on MMLU under \textit{positive-only} vs.\ \textit{negative-only} SFT. Results are averaged over Olympia, ACPBench, and HeadQA.}
  \label{fig:passk_mmlu}
\end{figure}

\subsection{Prompt for Categorize Negative Samples}\label{app:prompt}
We design a structured prompt to categorize each erroneous reasoning trajectory into a fine-grained error class. The classification framework contains 9 primary categories and 22 sub-categories. The full classification schema and the prompt used for categorization are shown in Figure~\ref{fig:prompt}.
\begin{figure*}[htbp!]
\centering
\small

\begin{tcolorbox}[
colback=blue!3!white,
colframe=blue!55!black,
  coltitle=white,
  title=\textbf{Prompt for Categorizing Negative Samples},
  fonttitle=\bfseries
]
\begin{lstlisting}[basicstyle=\ttfamily\small, breaklines=true]
You are an expert AI assistant tasked with identifying the single,
most specific error category from the list below.

Error Category List:
- Primary_category: Understanding Errors
  - sub_category: Problem Misunderstanding, Conceptual Misunderstanding
- Primary_category: Knowledge Errors
  - sub_category: Factual Error, Theorem Error, Definition Error
- Primary_category: Logical Errors
  - sub_category: Strategy Error, Reasoning Error, Premise Error, Consistency Error
- Primary_category: Calculation Errors
  - sub_category: Numerical Error, Formula Error, Parameter Error, Unit Error
- Primary_category: Programming Errors
  - sub_category: Syntax Error, Function Error, Data Type Error
- Primary_category: Formal Errors
  - sub_category: Symbol Error, Formatting Error
- Primary_category: Completeness Errors
  - sub_category: Boundary Omission
- Primary_category: Special Cases
  - sub_category: Reflection Error, Summary Error, Hallucination, Redundancy
- Primary_category: Evaluation System Errors
  - sub_category: Incorrect Ground Truth, Correct Answer Parsing Error

Data for Analysis:
- Question: {question}
- Ground Truth Answer: {groundtruth}
- Model's Reasoning Process (to be analyzed): {model_reasoning}

CRITICAL INSTRUCTION:
Analyze the provided reasoning process. Your response MUST be ONLY a single,
raw JSON object with the keys "sub_category" and "analysis". Do not include any
other text, explanations, apologies, or markdown formatting.

Example of a perfect response:
{
  "sub_category": "Premise Error",
  "analysis": "The model incorrectly assumed that all bicycles use plastic
               squares for identification, which is a flawed premise not
               supported by the question's context."
}
\end{lstlisting}
\end{tcolorbox}
\caption{Prompt used for categorizing \text{negative} reasoning samples into predefined error subcategories.}
\label{fig:prompt}
\end{figure*}


\subsection{Case Study of Negative Samples}
As discussed in Section~\ref{sec:analysis_infer}, \textit{negative} trajectories exhibit higher entropy than positives ones on certain reasoning tokens and transition words. For illustration, we select one case and highlight the high-entropy segments. The results in Figure~\ref{fig:case_study_neg_path} show that \textit{negatives} contain substantially more such reasoning-related high-entropy fragments than \textit{positives}.
%
%
\begin{figure*}[htbp!]
\centering
\small

\begin{tcolorbox}[
  colback=gray!4!white,
  colframe=gray!85!black,
  coltitle=white,
  title=\textbf{Problem},
  fonttitle=\bfseries
]
The graph shows the total distance Sam drove from 6 a.m.\ to 11 a.m. How many miles per hour is the car's average speed for the period from 6 a.m.\ to 11 a.m.?  

\medskip

unitsize(0.2inch);
draw((0,0)--(5.5,0));
draw((0,0)--(0,8.5));
draw((1,0)--(1,8.5));
draw((2,0)--(2,8.5));
draw((3,0)--(3,8.5));
draw((4,0)--(4,8.5));
draw((5,0)--(5,8.5));
draw((0,1)--(5.5,1));
draw((0,8)--(5.5,8));
draw((0,7)--(5.5,7));
draw((0,6)--(5.5,6));
draw((0,5)--(5.5,5));
draw((0,4)--(5.5,4));
draw((0,3)--(5.5,3));
draw((0,2)--(5.5,2));
draw((0,0)--(1,2)--(2,3)--(3,5)--(4,6)--(5,8));
dot((0,0));
dot((1,2));
dot((2,3));
dot((3,5));
dot((4,6));
dot((5,8));
label("6",(0,-0.5),S);
label("7",(1,-0.5),S);
label("8",(2,-0.5),S);
label("9",(3,-0.5),S);
label("10",(4,-0.5),S);
label("11",(5,-0.5),S);
label("0",(-0.5,0),W);
label("40",(-0.5,2),W);
label("80",(-0.5,4),W);
label("120",(-0.5,6),W);
label("160",(-0.5,8),W);
label("Time of Day (a.m.)",(2.7,-2),S);
label("Total distance",(-0.5,9),N);

\end{tcolorbox}

\vspace{0.1\baselineskip}

\begin{tcolorbox}[
  colback=teal!2!white,
  colframe=teal!80!black,
  coltitle=white,
  title=\textbf{Model's Thinking Trajectory and Answer},
  fonttitle=\bfseries
]

Let me start with the total time. The period is from 6 a.m.\ to 11 a.m. Let me count the hours. From 6 to 7 is 1 hour, 7 to 8 is another, 8 to 9, 9 to 10, and 10 to 11. That's 5 hours total. Wait, is that right? Let me check again. From 6 to 11 is 5 hours? Wait, 11 minus 6 is 5, yes. So total time is 5 hours. Okay, that seems straightforward.

Now, the total distance. The graph shows the total distance driven at different times. The Asymptote code draws a graph with time on the x-axis and total distance on the y-axis. The points given are at time 6 a.m.\ (which is the first point, x = 0), then at 7 a.m.\ (x = 1), 8 a.m.\ (x = 2), etc., up to 11 a.m.\ (x = 5). The corresponding distances are given as y-values. Let me check the Asymptote code again to see the coordinates.

Looking at the Asymptote code, the graph is drawn from $(0,0)$ to $(1,2)$ to $(2,3)$ to $(3,5)$ to $(4,6)$ to $(5,8)$. The labels on the y-axis are 0, 40, 80, 120, 160. So each unit on the y-axis corresponds to 40 miles? Let me confirm. The labels are at $y = 0, 2, 4, 6, 8$. The labels next to them are 0, 40, 80, 120, 160. So, each vertical unit is 40 miles. Therefore, the y-coordinate multiplied by 40 gives the total distance in miles.

So, for example:
\begin{itemize}
\item At 6 a.m.\ (x = 0), the total distance is 0 miles.
\item At 7 a.m.\ (x = 1), the total distance is 2 units, which is $2 \times 40 = 80$ miles.
\item At 8 a.m.\ (x = 2), it's 3 units, so $3 \times 40 = 120$ miles.
\item At 9 a.m.\ (x = 3), 5 units, which is $5 \times 40 = 200$ miles.
\item At 10 a.m.\ (x = 4), 6 units, which is $6 \times 40 = 240$ miles.
\item At 11 a.m.\ (x = 5), 8 units, which is $8 \times 40 = 320$ miles.
\end{itemize}

The question is about the total distance from 6 a.m.\ to 11 a.m. The graph is showing cumulative distance. So at 11 a.m., the total distance is 320 miles. Therefore, the total distance driven is 320 miles.

If the graph is cumulative, then the total distance at 11 a.m.\ is the total distance driven from 6 a.m.\ to 11 a.m., so that would be 320 miles. Therefore, total distance is 320 miles, total time is 5 hours. Therefore, average speed is $320 \div 5 = 64$ miles per hour.

Let me check if the total distance is indeed 320. The last point is at 11 a.m., which is $x = 5, y = 8$. Since each $y$ unit is 40 miles, $8 \times 40 = 320$. Yes. So that seems correct.

Alternatively, maybe the question is trying to trick me? Suppose someone might think that the total distance is the sum of all the segments, but because the graph is cumulative, the total distance at 11 a.m.\ is already the total distance driven from 6 a.m.\ to 11 a.m.

We can also confirm by looking at increments: From 6--7: $80$ miles, From 7--8: $120 - 80 = 40$ miles. From 8--9: $200 - 120 = 80$ miles. From 9--10: $240 - 200 = 40$ miles. From 10--11: $320 - 240 = 80$ miles
Summing gives $80 + 40 + 80 + 40 + 80 = 320$ miles, consistent with the final value.

So average speed is total distance divided by total time. Total time is 5 hours. $320 \div 5 = 64$, so 64 miles per hour.

The time from 6 a.m.\ to 11 a.m.\ is indeed 5 hours. The x-axis is labeled from 6 a.m.\ to 11 a.m., with each unit representing one hour, so that checks out.

Therefore, I think my answer is correct.

\textbf{Final Answer}: the car's average speed is $\boxed{64}$ miles per hour.

\end{tcolorbox}

\caption{Case study of a negative trajectory from the OpenMathReasoning training dataset. The model misinterprets the problem, but its subsequent step-by-step reasoning and formula derivations remain structurally correct.}
\label{fig:case_study_neg_path}
\end{figure*}


\subsection{Case Study of Samples Generated by Various Models}
To qualitatively evaluate the differences in reasoning behavior, we provide a comparative case study in Figure~\ref{fig:case_study}, contrasting trajectories from $M_{\text{pos}}$ and $M_{\text{neg}}$. $M_{\text{neg}}$ tends to exhibit more frequent use of discourse and hesitation tokens(e.g. ``wait'' ,``but''), particularly when encountering complex reasoning steps. These qualitative observations align with the token distribution analysis in Figure~\ref{fig:freq_entropy}, confirming that $M_{\text{neg}}$ allocates a larger portion of its generation budget to connective exploration. By maintaining multiple plausible continuations instead of committing prematurely to a single path, $M_{\text{neg}}$ demonstrates a more exhaustive search of the reasoning space before finalizing its response.
%
%
\begin{figure*}[htbp!]
\centering
\small
\begin{tcolorbox}[
  colback=gray!4!white,
  colframe=gray!85!black,
  coltitle=white,
  title=\textbf{An Example from MATH-500},
  fonttitle=\bfseries
]
\textbf{Problem}: Denali and Nate work for a dog walking business and are paid for each dog they walk. Denali is responsible for $16$ dogs and Nate is responsible for $12$ dogs. Under the company's new policy, they will be assigned or unassigned new dogs in groups of $x$ dogs. The ratio of Denali's pay to Nate's pay would be the same if Denali started walking $4x$ more dogs and Nate stayed at $12$ dogs or if $x$ of Nate's dogs were reassigned to Denali. Find $x$ if $x\neq0$. \textbf{Answer}: \boxed{5}
\end{tcolorbox}

\noindent
\begin{minipage}[t]{0.49\textwidth}
  \begin{tcolorbox}[
    colback=teal!2!white, 
    colframe=teal!80!black, 
    coltitle=white,
    title=\textbf{$M_{pos}$ Thinking Trajectories},
    fonttitle=\bfseries,
    equal height group=mathcompare,
  ]
  Okay, let me try to figure out this problem...

So, setting up the equation...

Let me compute both sides...

\vspace{1\baselineskip}
...would be the same under two different scenarios. Let me parse that...

\vspace{1\baselineskip}
So, solution is x = 0...

\vspace{1\baselineskip}
But the problem says ...

\vspace{1\baselineskip}
Let me check again...

\vspace{1\baselineskip}
x = 0 is correct...

The final answer is \(\boxed{0}\).
  \end{tcolorbox}
\end{minipage}
\hfill
\begin{minipage}[t]{0.49\textwidth}
  \begin{tcolorbox}[
    colback=teal!2!white, 
    colframe=teal!80!black, 
    coltitle=white,
    title=\textbf{$M_{neg}$ Thinking Trajectories},
    fonttitle=\bfseries,
    equal height group=mathcompare,
  ]
Okay, let me try to figure out this problem...

The key is that...

Now, we can solve this...

..., so the answer is x = 3.

Wait, let me check...

it is wrong...

\vspace{1\baselineskip}
\textit{Alternatively, maybe there's a different way to interpret the problem?...}

Maybe I missed something...

\vspace{1\baselineskip}
\textit{Alternatively, maybe there's a different interpretation? Let me check...}

Wait, no. The problem says...

..., so the answer is 5.

But, Let me check ...

Therefore, x = 5 is correct...

The final answer is \(\boxed{5}\).  
  \end{tcolorbox}
\end{minipage}

\caption{Case study of thinking trajectories for $M_{pos}$ and $M_{neg}$ on the same question. }
\label{fig:case_study}
\end{figure*}


\end{document}